\DeclareMathOperator*{\argmax}{arg\,max}
\DeclareMathOperator*{\argmin}{arg\,min}
\newtheorem{theorem}{Theorem}
\def\BibTeX{{\rm B\kern-.05em{\sc i\kern-.025em b}\kern-.08em T\kern-.1667em\lower.7ex\hbox{E}\kern-.125emX}}
\begin{document}

\title{Scalable Spectral Clustering with Nystr\"om Approximation: Practical and Theoretical Aspects}

\author{Farhad Pourkamali-Anaraki,
\IEEEmembership{Member, IEEE}
\thanks{F.~Pourkamali-Anaraki is with the Department of Computer Science, University of Massachusetts, Lowell, MA 01854 USA (e-mail: farhad\_pourkamali@uml.edu). IEEE Open Journal of Signal Processing, vol. 1, pp. 242--256, 2020. DOI: \href{http://dx.doi.org/10.1109/OJSP.2020.3039330}{10.1109/OJSP.2020.3039330}}}

\IEEEtitleabstractindextext{\begin{abstract}Spectral clustering techniques are valuable tools in signal processing and machine learning for partitioning complex data sets. The effectiveness of spectral clustering stems from constructing a non-linear embedding based on creating a similarity graph and computing the spectral decomposition of the Laplacian matrix. However, spectral clustering methods fail to scale to large data sets because of high computational cost and memory usage. A popular approach for addressing these problems utilizes the Nystr\"om method, an efficient sampling-based algorithm for computing low-rank approximations to large positive semi-definite matrices. This paper demonstrates how the previously popular approach of Nystr\"om-based spectral clustering has severe limitations. Existing time-efficient methods ignore critical information by prematurely reducing the rank of the similarity matrix associated with sampled points. Also, current understanding is limited regarding how utilizing the Nystr\"om approximation will affect the quality of spectral embedding approximations. To address the limitations, this work presents a principled spectral clustering algorithm that exploits spectral properties of the similarity matrix associated with sampled points to regulate accuracy-efficiency trade-offs. We provide theoretical results to reduce the current gap and present numerical experiments with real and synthetic data. Empirical results demonstrate the efficacy and efficiency of the proposed method compared to existing spectral clustering techniques based on the Nystr\"om method and other efficient methods. The overarching goal of this work is to provide an improved baseline for future research directions to accelerate spectral clustering.\end{abstract}

\begin{IEEEkeywords}
Approximation methods, Clustering algorithms, Computational complexity, Sampling methods
\end{IEEEkeywords}

}

\maketitle

\section{INTRODUCTION}
Cluster analysis is a fundamental problem in signal processing and exploratory data analysis that divides a data set into several groups using the information found only in the data. Among several techniques \cite{saxena2017review,rodriguez2019clustering}, spectral clustering \cite{ng2002spectral,von2007tutorial} is one of the most prominent and successful methods to capture complex structures, such as non-spherical clusters. In these scenarios, spectral clustering  outperforms popular Euclidean clustering techniques, such as K-means clustering \cite{pourkamali2017preconditioned,vijayaraghavan2017clustering}. Hence, spectral clustering has found applications in various domains, including computer vision \cite{shi2000normalized,tacsdemir2015approximate,chen2017linear}, biology \cite{shi2017spectral}, neuroscience \cite{thirion2014fmri}, recommender systems \cite{li2019new}, and blind source separation \cite{bach2006learning}. 

Spectral clustering expresses data clustering as a graph partitioning problem by constructing an undirected similarity graph with each point in the data set being a node. A popular connectivity measure employs the radial basis kernel function of the form $\kappa(\mathbf{x}_i,\mathbf{x}_j)=\exp(-\|\mathbf{x}_i-\mathbf{x}_j\|_2^2/\sigma^2)$,
where $\mathcal{X}=\{\mathbf{x}_1,\ldots,\mathbf{x}_n\}$ is the set of $n$  data points in $\mathbb{R}^d$ to be partitioned into $k$ clusters, and $\sigma>0$ is the bandwidth parameter. Thus, the first step of spectral clustering involves forming a positive semi-definite kernel matrix $\mathbf{K}\in\mathbb{R}^{n\times n}$ with the $(i,j)$-th entry $[\mathbf{K}]_{ij}=\kappa(\mathbf{x}_i,\mathbf{x}_j)$, which describes similarities among $n$ input data points. Therefore, a significant challenge in applying spectral clustering is the computation and storage of the entire kernel matrix, which requires $\mathcal{O}(n^2d)$ time and $\mathcal{O}(n^2)$ space. The quadratic complexity in the number of input data points renders spectral clustering intractable for large data sets.

In this work, we focus on the popular method of  normalized cut \cite{shi2000normalized,he2016iterative} to partition the resulting similarity graph.  This method forms the normalized Laplacian matrix as follows:
\begin{equation}
	\mathbf{L}=\mathbf{D}^{-1/2}(\mathbf{D}-\mathbf{K})\mathbf{D}^{-1/2}=\mathbf{I}_n - \mathbf{D}^{-1/2}\mathbf{K}\mathbf{D}^{-1/2},\label{eq:laplacian}
\end{equation}
where $\mathbf{D}=\text{diag}(\mathbf{K}\mathbf{1}_n)\in\mathbb{R}^{n\times n}$ is the diagonal degree matrix associated with $\mathbf{K}$, $\mathbf{I}_n$ is the identity matrix, and $\mathbf{1}_n$ is the vector of all ones. The spectral embedding of the original data $\mathcal{X}$ is then obtained by solving the following trace minimization problem:
\begin{equation}
	\argmin_{\mathbf{H}\in\mathbb{R}^{n\times k}, \; \mathbf{H}^T\mathbf{H}=\mathbf{I}_k} \text{tr}\big(\mathbf{H}^T\mathbf{L}\mathbf{H}\big),\label{eq:eigVec}
\end{equation}
where $\text{tr}(\cdot)$ represents the matrix trace. The above optimization problem has a closed-form solution; its optimizer is obtained by eigenvectors corresponding to the $k$ smallest eigenvalues of the Laplacian matrix $\mathbf{L}$. Thus, spectral clustering learns a non-linear map that embeds the original data into the eigenspace of $\mathbf{L}$ for uncovering the intrinsic structure of the input data. 

We can also view the spectral embedding process as solving a low-rank approximation problem \cite{zhu2018global}. To this end, we substitute $\mathbf{L}$ from \eqref{eq:laplacian} into the above minimization problem. Using the constraint $\mathbf{H}^T\mathbf{H}=\mathbf{I}_k$ allows rewriting the minimization problem in \eqref{eq:eigVec} as follows:
\begin{equation}
	\argmax_{\mathbf{H}\in\mathbb{R}^{n\times k},\; \mathbf{H}^T\mathbf{H}=\mathbf{I}_k} \text{tr}\big(\mathbf{H}^T\mathbf{M}\mathbf{H}\big),\label{eq:modLap}
\end{equation}
where we used the fact that $\text{tr}(\mathbf{H}^T\mathbf{H})=k$ is a constant, and introduced the modified kernel matrix:
\begin{equation}
\mathbf{M}:=\mathbf{D}^{-1/2}\mathbf{K}\mathbf{D}^{-1/2}\in\mathbb{R}^{n\times n}.\label{eq:M}
\end{equation}
 The solution of this maximization problem is obtained by computing eigenvectors corresponding to the $k$ largest eigenvalues of $\mathbf{M}$, denoted by $\mathbf{U}_{\mathbf{M},k}\in\mathbb{R}^{n\times k}$. That is, we should compute the rank-$k$ approximation of $\mathbf{M}$ to map the original data from $\mathbb{R}^d$ into $\mathbb{R}^k$. The computational complexity associated with finding the  $k$ leading eigenvectors of the modified kernel matrix is $\mathcal{O}(n^2k)$ without making assumptions on the structure of the kernel matrix, such as $\mathbf{K}$ being sparse \cite{halko2011finding}. When the number of data points $n$ is large, the exact solution of this step becomes computationally prohibitive and suffers from large memory overhead. 

The third step of spectral clustering is to partition the $n$ rows of $\mathbf{U}_{\mathbf{M},k}$ using K-means clustering \cite{bachem2018scalable}, where the goal is to find  $k$ centroids and assign each embedded point to the closest centroid for partitioning the original data. Although solving this problem is NP-hard \cite{drineas1999clustering}, it is common to use iterative algorithms that lead to expected approximation guarantees \cite{kmeansplus,ahmadian2019better}. As these algorithms should compute distances between $n$ data points and $k$ centroids in $\mathbb{R}^k$,  the third step requires $\mathcal{O}(n k^2)$ operations per iteration, and a few tens of iterations typically suffice to cluster the data. Since the third step takes linear time in the number of data points, the central challenge that arises in large-scale data settings is computing the leading eigenvectors of the modified kernel matrix $\mathbf{M}$, which is the main focus of this paper.   

While other variants exist in the literature, including robust methods to noisy data \cite{bojchevski2017robust,kim2020outer}, this work focuses on the standard formulation of spectral clustering. Alg.~\ref{alg:SC} summarizes the three main steps of prototypical spectral clustering. Empirical evidence suggests that normalizing the rows of $\mathbf{U}_{\mathbf{M},k}$ improves stability and accuracy \cite{von2007tutorial}. As these row vectors are in $\mathbb{R}^k$, the normalization cost scales linearly in terms of the number of data points. Using cross-validation is a common technique for selecting the kernel parameter $\sigma$. 

\begin{algorithm}
     \textbf{Input:}  data set $\mathcal{X}=\{\mathbf{x}_1,\ldots,\mathbf{x}_n\}$, kernel parameter $\sigma$, number of clusters $k$.
\begin{algorithmic}[1]
	\Function{SC}{$\mathcal{X}, \sigma, k$}
	\State Form the kernel matrix $\mathbf{K}\in\mathbb{R}^{n\times n}$;\label{alg:step1}
	\State Construct $\mathbf{M}=\mathbf{D}^{-1/2}\mathbf{K}\mathbf{D}^{-1/2}\in\mathbb{R}^{n\times n}$, where $\mathbf{D}=\text{diag}(\mathbf{K}\mathbf{1}_n)$;
	\State Compute the $k$  leading eigenvectors of $\mathbf{M}$ to obtain the spectral embedding $\mathbf{U}_{\mathbf{M},k}\in\mathbb{R}^{n\times k}$;
	\State Normalize each row of $\mathbf{U}_{\mathbf{M},k}$ to have unit length;
	\State Perform K-means clustering over the rows of the normalized matrix $\mathbf{U}_{\mathbf{M},k}$;
	
	{
	\Return Clustering results.
}
	\caption{Prototypical Spectral Clustering (SC)}\label{alg:SC}
	\EndFunction
\end{algorithmic}
\end{algorithm}

\subsection{Related Work on Accelerating Spectral Clustering}
Various methods have been proposed to accelerate spectral clustering by computing an approximate spectral embedding of the original data. Recent work \cite{tremblay2020approximating} presented an excellent review of the literature on this topic for interested readers. In this paper, we divide the related work into two main categories: (1) methods that circumvent the computation of the full kernel matrix, and (2) techniques that consider the similarity graph as one of the inputs to spectral clustering and, thus, ignore the cost associated with step \ref{alg:step1} of Alg.~\ref{alg:SC}. The former is more realistic since constructing full kernel matrices is computationally prohibitive, even for medium-sized data. We further divide the first line of work into three sub-categories:
 \begin{enumerate}
 	\item explicit approximation of the kernel matrix;
 	\item random Fourier features to approximate the radial basis kernel function; 
 	\item forming a sparse similarity graph. 
 \end{enumerate}

The Nystr\"om method is one of the most popular techniques for approximating positive semi-definite matrices. In a nutshell, the Nystr\"om method \cite{williams2001using} selects $m<n$ points from the original data set $\mathcal{X}$ using a sampling strategy, such as uniform sampling or a more complicated non-uniform selection technique \cite{kumar2012sampling,sun2015review,pourkamali2018randomized}. After choosing a subset of the data, the so-called \textit{landmarks} which we denote them by $\mathcal{Z}=\{\mathbf{z}_1,\ldots,\mathbf{z}_m\}$, one should compute similarities between the original data $\mathcal{X}$ and $\mathcal{Z}$, as well as pairwise similarities among the elements of  $\mathcal{Z}$. Hence, the Nystr\"om method constructs two matrices $\mathbf{C}\in\mathbb{R}^{n\times m}$ and $\mathbf{W}\in\mathbb{R}^{m\times m}$ such that $[\mathbf{C}]_{ij}=\kappa(\mathbf{x}_i,\mathbf{z}_j)$ and $[\mathbf{W}]_{ij}=\kappa(\mathbf{z}_i,\mathbf{z}_j)$, which takes $\mathcal{O}(nmd)$ time. We then obtain an approximation of the kernel matrix in the form of:
\begin{equation}
\mathbf{K}\approx\widehat{\mathbf{K}}=\mathbf{C}\mathbf{W}^\dagger\mathbf{C}^T,
\end{equation}
where $\mathbf{W}^\dagger$ is the pseudo-inverse of $\mathbf{W}$. 

When employing the Nystr\"om method, the number of landmarks should exceed $k$, i.e., the desired number of clusters, and increasing $m$ is a common practice to improve accuracy \cite{wang2019scalable,pourkamali2019improved}. The Nystr\"om approximation results in linear time complexity in the size of the original data for a fixed landmark set \cite{bach2013sharp}. However, a critical task is to efficiently integrate the Nystr\"om method with spectral clustering because the ultimate goal is to estimate the  $k$ leading eigenvectors of the modified kernel matrix $\mathbf{M}$  in linear time (instead of $\mathbf{K}$).  Therefore, several variants of Nystr\"om-based spectral clustering have been proposed \cite{fowlkes2004spectral,li2011time,mohan2017exploiting}, where the underlying theme is to compute the spectral decomposition of the inner matrix $\mathbf{W}$ to save on computational resources and lift the solution from $\mathbb{R}^m$ back to $\mathbb{R}^n$. Although these methods reduce time complexity, a downside is the lack of a theoretical framework and understanding concerning how the Nystr\"om method affects the quality of resulting spectral embedding. 

The second sub-category seeks to directly approximate the radial basis kernel function in the form of  $\kappa(\mathbf{x}_i,\mathbf{x}_j)\approx \phi(\mathbf{x}_i)^T\phi(\mathbf{x}_j)$, where $\phi(\mathbf{x})\in\mathbb{R}^D$ is known as the random Fourier feature vector \cite{rahimi2008random}. The main idea behind this approach is to use the Fourier transform of shift-invariant kernel functions, including Gaussian kernels, for the efficient computation of feature vectors \cite{sutherland2015error,wu2016revisiting,he2018fast}. Recent work \cite{wu2018scalable} utilized this strategy to implicitly approximate the kernel matrix $\mathbf{K}$ for performing spectral clustering in linear time. The introduced method requires the dimension of feature vectors $D$ to be significantly greater than the ambient dimension $d$. However, each feature vector should contain only a few non-zero entries to reduce the subsequent eigenvalue decomposition cost. 

The third sub-category utilizes fast nearest neighbor search algorithms \cite{muja2014scalable} to form similarity graphs with sparse kernel matrices, which may substantially reduce the computational and memory complexities associated with the spectral decomposition step \cite{dong2011efficient,lucinska2012spectral}. In this case, the number of nearest neighbors is an additional tuning parameter that remarkably impacts the connectivity of the similarity graph and the following spectral embedding. 

The second class of accelerated spectral clustering techniques assumes that the similarity graph is one of the inputs. Therefore, the main task is to compute the eigenvalue decomposition of the modified kernel matrix $\mathbf{M}$, which is assumed to be available at no cost. A possible solution focuses on utilizing tools from the randomized numerical linear algebra literature, such as randomized subspace iteration \cite{saibaba2019randomized}. These methods typically employ random projections to identify a subspace that approximately captures the range of $\mathbf{M}$ \cite{boutsidis2015spectral}. Another approach seeks to form a sparse similarity matrix according to the effective resistances of all nodes \cite{spielman2011graph}, which can be approximated in nearly linear time. However, these techniques are practical only when kernel matrices are accessible.

\subsection{Main Contributions}
In this paper, we design and study an efficient method for incorporating the Nystr\"om approximation into prototypical spectral clustering. The main feature of the proposed approach is to exploit spectral properties of the inner matrix $\mathbf{W}$, allowing us to regulate accuracy-efficiency trade-offs. Hence, our approach is suitable for clustering complex data sets containing tens or hundreds of thousands of samples. 

As we will discuss in detail, efficient Nystr\"om-based spectral clustering methods take a two-step approach, which entails restricting the rank of the kernel matrix associated with the landmark set, followed by lifting the solution back to the original space. The disentanglement of the two similarity matrices $\mathbf{C}$ and $\mathbf{W}$ for computing the spectral decomposition gives rise to several issues. First, performing the rank reduction step too early adversely affects the spectral embedding process. Second, the produced eigenvectors are not necessarily orthogonal, which will require additional orthogonalization steps. Third, providing theoretical guarantees to understand the relationship between the Nystr\"om approximation error and the quality of resulting spectral embedding becomes complicated. A serious concern is that a small perturbation of the kernel matrix $\mathbf{K}$ may have an out-sized influence on the modified kernel matrix $\mathbf{M}=\mathbf{D}^{-1/2}\mathbf{K}\mathbf{D}^{-1/2}$ due to the perturbation of the degree matrix $\mathbf{D}$, which is used for normalizing the row and columns of the kernel matrix. 

This work improves  Nystr\"om-based spectral clustering by utilizing both matrices $\mathbf{C}$ and $\mathbf{W}$ at the same time. Our proposed approach automatically exploits decay in the spectrum of the inner matrix $\mathbf{W}$ to regulate accuracy-efficiency trade-offs, instead of enforcing its rank to be $k$ as prescribed by the prior work. We then implicitly form the modified kernel matrix $\mathbf{M}$ and compute its leading eigenvectors.

A further advantage of the proposed approach is reducing the current gap in the literature between a provably good low-rank approximation of the kernel matrix $\mathbf{K}$ to a provably accurate estimation of $\mathbf{M}$.  We derive an upper bound for the perturbation of the modified kernel matrix due to the Nystr\"om method by making use of the Taylor series expansion for matrix functions \cite{deadman2016taylor}. Our analysis shows that a relatively small perturbation of the kernel matrix results in a practical upper bound for approximating the modified kernel matrix, or equivalently the normalized Laplacian matrix. We present numerical experiments to understand the main assumptions and bounds involved in our theoretical results. 

Finally, we present an extensive empirical evaluation of the proposed approach, using both synthetic and real data. We compare our introduced method with other state-of-the-art approximate spectral clustering methods that circumvent the formation of the entire similarity graph, including techniques based on random Fourier features and sparse similarity graphs. We also corroborate the scalability of our proposed spectral clustering method concerning the size of input data and the number of landmarks. 

\subsection{Paper Organization }
The rest of the paper is outlined as follows. We first thoroughly review the related work on Nystr\"om-based spectral clustering and exemplify several drawbacks in Section \ref{sec:rel}. Then, Section \ref{sec:main} explains our proposed scalable approach for improving Nystr\"om-based spectral clustering while providing new theoretical results to reduce the current gap in the literature. We also present numerical experiments to verify the assumptions made in our analysis. Section \ref{sec:exper} empirically demonstrates trade-offs between accuracy and efficiency of the proposed Nystr\"om-based spectral clustering method on benchmark data sets. We present concluding remarks in Section \ref{sec:conc}. 

\subsection{Notation and Preliminaries}
We denote column vectors with lower-case bold letters and matrices with upper-case bold letters. We take $\mathbf{1}_n$ to be the $n$-dimensional vector of all ones and $\mathbf{I}_n$ represents the $n\times n$ identity matrix. For a vector $\mathbf{x}\in\mathbb{R}^n$, let $\|\mathbf{x}\|_2$ be the Euclidean norm and  $\text{diag}(\mathbf{x})$ returns a diagonal matrix with the elements of $\mathbf{x}$ on the main diagonal. Given a matrix $\mathbf{A}\in\mathbb{R}^{n\times m}$, the $(i,j)$-th element is denoted by $[\mathbf{A}]_{ij}$ and $\mathbf{A}^T$ is the transpose of $\mathbf{A}$. The matrix $\mathbf{A}$ admits a factorization, known as the truncated singular value decomposition (SVD), in the form of $\mathbf{A}=\mathbf{U}_{\mathbf{A}}\boldsymbol{\Sigma}_{\mathbf{A}}\mathbf{V}_{\mathbf{A}}^T$, where $\mathbf{U}_{\mathbf{A}}\in\mathbb{R}^{n\times r}$ and $\mathbf{V}_{\mathbf{A}}\in\mathbb{R}^{m\times r}$ are matrices with orthonormal columns referred to as the left singular vectors and right singular vectors, respectively. The parameter $r<\min\{n,m\}$ represents the rank of $\mathbf{A}$ and the diagonal matrix $\boldsymbol{\Sigma}_{\mathbf{A}}=\text{diag}([\sigma_1(\mathbf{A}),\ldots,\sigma_r(\mathbf{A})])$ contains the singular values of $\mathbf{A}$ in descending order, i.e., $\sigma_1(\mathbf{A})\geq \ldots\geq \sigma_r(\mathbf{A})>0$. In this paper, the expression ``$\mathbf{A}$ has rank $r$'' means that the rank of $\mathbf{A}$ does not exceed $r$. Using this factorization, we can define several standard matrix norms, including the Frobenius norm $\|\mathbf{A}\|_F^2:=\sum_{i=1}^{r}\sigma_i(\mathbf{A})^2$ and the spectral norm $\|\mathbf{A}\|_2:=\sigma_1(\mathbf{A})$. 

If $\mathbf{A}\in\mathbb{R}^{n\times n}$ is symmetric positive semi-definite, we have $\mathbf{U}_{\mathbf{A}}=\mathbf{V}_{\mathbf{A}}$ in the previous factorization, which is called the reduced eigenvalue decomposition (EVD) or spectral decomposition. The columns of $\mathbf{U}_{\mathbf{A}}\in\mathbb{R}^{n\times r}$ are the eigenvectors of $\mathbf{A}$ and $\boldsymbol{\Sigma}_{\mathbf{A}}$ contains the corresponding eigenvalues in descending order. Thus, we get $\mathbf{A}=\mathbf{U}_{\mathbf{A}}\boldsymbol{\Sigma}_{\mathbf{A}}\mathbf{U}_{\mathbf{A}}^T$, where $\mathbf{U}_{\mathbf{A}}^T\mathbf{U}_{\mathbf{A}}=\mathbf{I}_r$. The Moore-Penrose pseudo-inverse of $\mathbf{A}$ can be obtained from the EVD as $\mathbf{A}^\dagger=\mathbf{U}_{\mathbf{A}}\boldsymbol{\Sigma}_\mathbf{A}^{-1}\mathbf{U}_{\mathbf{A}}^T$, where $\boldsymbol{\Sigma}_\mathbf{A}^{-1}=\text{diag}([\sigma_1(\mathbf{A})^{-1},\ldots,\sigma_r(\mathbf{A})^{-1}])$. When $\mathbf{A}$ is full rank, i.e., $r=n$, we have $\mathbf{A}^\dagger=\mathbf{A}^{-1}$. The trace of $\mathbf{A}$ is equal to the sum of its eigenvalues, i.e., $\text{tr}(\mathbf{A})=\sum_{i=1}^{r}\sigma_i(\mathbf{A})$. The matrix $\mathbf{A}$ is positive semi-definite of rank $r$ if and only if there exists a matrix $\mathbf{B}$ of rank $r$ such that $\mathbf{A}=\mathbf{B}\mathbf{B}^T$. 

In this paper, given an integer $k$ that does not exceed the rank parameter $r$, we denote the first $k$ columns of $\mathbf{U}_{\mathbf{A}}$, i.e., the $k$ leading eigenvectors of $\mathbf{A}$, by $\mathbf{U}_{\mathbf{A},k}\in\mathbb{R}^{n\times k}$. Similarly, $\boldsymbol{\Sigma}_{\mathbf{A},k}\in\mathbb{R}^{k\times k}$ represents a diagonal sub-matrix that contains the $k$ largest eigenvalues of $\mathbf{A}$. Based on the Eckart-Young-Mirsky theorem, $\llbracket\mathbf{A}\rrbracket_k:=\mathbf{U}_{\mathbf{A},k}\boldsymbol{\Sigma}_{\mathbf{A},k}\mathbf{U}_{\mathbf{A},k}^T$ is the best rank-$k$ approximation to $\mathbf{A}$ because it minimizes the approximation error $\|\mathbf{A}-\widehat{\mathbf{A}}\|$ for any unitarily invariant norm over all matrices $\widehat{\mathbf{A}}$ of rank  $k$. The error incurred in the spectral norm by the best rank-$k$ approximation  can be identified as:
\begin{equation}
\|\mathbf{A}-\llbracket\mathbf{A}\rrbracket_k\|_2=\sigma_{k+1}(\mathbf{A}).
\end{equation}

\section{Review of Spectral Clustering Using Nystr\"om Approximation}\label{sec:rel}
Previous research on integrating the Nystr\"om method with spectral clustering has proposed several techniques to compute an approximate spectral embedding in linear time concerning the data size $n$. The key challenge is how to utilize the Nystr\"om approximation of the kernel matrix, i.e., $\widehat{\mathbf{K}}=\mathbf{C}\mathbf{W}^\dagger\mathbf{C}^T$, to compute the top eigenvectors of the approximate modified kernel matrix $\widehat{\mathbf{M}}=\widehat{\mathbf{D}}^{-1/2}\widehat{\mathbf{K}}\widehat{\mathbf{D}}^{-1/2}$, where $\widehat{\mathbf{D}}=\text{diag}(\widehat{\mathbf{K}}\mathbf{1}_n)$. Obviously, the desired linear complexity in terms of time and space does not permit forming square matrices of size $n\times n$. Thus, 
the previous research focused on constructing a small matrix of size $m\times m$ as a proxy for the kernel matrix $\mathbf{K}$. Although this strategy reduces the cost, a major concern is that critical information regarding the structure of the matrix $\mathbf{C}\in\mathbb{R}^{n\times m}$ may be ignored during this process, adversely impacting the accuracy of spectral clustering. In this section, we review two relevant prior techniques and highlight their limitations to motivate our proposed approach. 

Exploiting the Nystr\"om approximation for estimating the leading eigenvectors of the modified kernel matrix was first introduced in \cite{fowlkes2004spectral}. The main idea behind this approach is to find the exact EVD of an $m\times m$ matrix, where $m$ refers to the number of selected landmarks. Then, a linear transformation from $\mathbb{R}^m$ to $\mathbb{R}^n$ generates an approximate spectral embedding of the original data. Let us write the Nystr\"om approximation $\widehat{\mathbf{K}}=\mathbf{C}\mathbf{W}^{-1}\mathbf{C}^T$ using two sub-matrices $\mathbf{W}$ and $\mathbf{B}$:
\begin{equation}
\widehat{\mathbf{K}}= \begin{bmatrix} \mathbf{W} \\ \mathbf{B}^T\end{bmatrix}\mathbf{W}^{-1}\begin{bmatrix}\mathbf{W}&\mathbf{B} \end{bmatrix}=\begin{bmatrix} \mathbf{W} & \mathbf{B}\\\mathbf{B}^T& \mathbf{B}^T\mathbf{W}^{-1}\mathbf{B}\end{bmatrix},
\end{equation}
where  $\mathbf{W}\in\mathbb{R}^{m\times m}$ representing similarities between $m$ distinct landmark points using the Gaussian kernel function has full rank \cite{scholkopf2001learning}, which allows calculating the inverse of $\mathbf{W}$ (this discussion also holds when using pseudo-inverse). Similarly,  the information regarding connectivity measures between $\mathcal{Z}$ and the remaining data points $\mathcal{X}\setminus\mathcal{Z}$  is encoded in $\mathbf{B}\in\mathbb{R}^{m\times (n-m)}$. One can compute and decompose the approximate degree matrix $\widehat{\mathbf{D}}=\text{diag}(\widehat{\mathbf{K}}\mathbf{1}_n)$ into two diagonal sub-matrices as follows: 
\begin{align}
&\widehat{\mathbf{D}}_1 =  \text{diag}(\mathbf{W}\mathbf{1}_m+\mathbf{B}\mathbf{1}_{n-m})\in\mathbb{R}^{m\times m}, \nonumber\\
&\widehat{\mathbf{D}}_2 =    \text{diag}(\mathbf{B}^T\mathbf{1}_m+\mathbf{B}^T\mathbf{W}^{-1}\mathbf{B}\mathbf{1}_{n-m})\in\mathbb{R}^{(n-m)\times (n-m)}.
\end{align}
Thus, we can compute two normalized matrices $\widetilde{\mathbf{W}}:=\widehat{\mathbf{D}}_1^{-1/2}\mathbf{W}\widehat{\mathbf{D}}_1^{-1/2}$ and $\widetilde{\mathbf{B}}:=\widehat{\mathbf{D}}_1^{-1/2}\mathbf{B}\widehat{\mathbf{D}}_2^{-1/2}$.
The next step is to define $\mathbf{R}\in\mathbb{R}^{m\times m}$ and calculate its EVD: 
\begin{equation}
\mathbf{R}:=\widetilde{\mathbf{W}}+\widetilde{\mathbf{W}}^{-1/2}\widetilde{\mathbf{B}}\widetilde{\mathbf{B}}^T\widetilde{\mathbf{W}}^{-1/2}=\mathbf{U}_{\mathbf{R}}\boldsymbol{\Sigma}_{\mathbf{R}}\mathbf{U}_{\mathbf{R}}^T.\label{eq:R}
\end{equation}
The last step is to find the relationship between the spectral decomposition of $\mathbf{R}$ and  the modified kernel matrix $\widehat{\mathbf{M}}$:
\begin{equation}
\widehat{\mathbf{U}}_{\mathbf{M}}^{nys1}:=\begin{bmatrix}\widetilde{\mathbf{W}}\\\widetilde{\mathbf{B}}^T\end{bmatrix}\widetilde{\mathbf{W}}^{-1/2}\mathbf{U}_{\mathbf{R}}\boldsymbol{\Sigma}_{\mathbf{R}}^{-1/2}\in\mathbb{R}^{n\times m},\; \widehat{\boldsymbol{\Sigma}}_{\mathbf{M}}^{nys1}:=\boldsymbol{\Sigma}_{\mathbf{R}}.\label{eq:lift1}
\end{equation}
Thus, the main contribution of \cite{fowlkes2004spectral} was to generate a factorization of $\widehat{\mathbf{M}}$ without explicitly
computing all its entries:
\begin{equation}
\widehat{\mathbf{M}}=\widehat{\mathbf{U}}_{\mathbf{M}}^{nys1} \widehat{\boldsymbol{\Sigma}}_{\mathbf{M}}^{nys1} \big(\widehat{\mathbf{U}}_{\mathbf{M}}^{nys1}\big)^T.
\end{equation}
Although this technique estimates the leading eigenvectors of the modified kernel matrix, a significant downside is the need to use  $\widetilde{\mathbf{B}}$, or equivalently $\mathbf{C}$, more than once to form the matrix $\mathbf{R}$ and map its eigenspace from $\mathbb{R}^m$ to $\mathbb{R}^n$ via \eqref{eq:lift1}. Hence, this method incurs high computational complexity and memory overhead for clustering huge data sets.

\begin{figure*}[t]
	\centering
	\subfloat[][\label{fig:prior_acc} estimation accuracy of leading eigenvectors]{
		\includegraphics[width=0.5\linewidth]{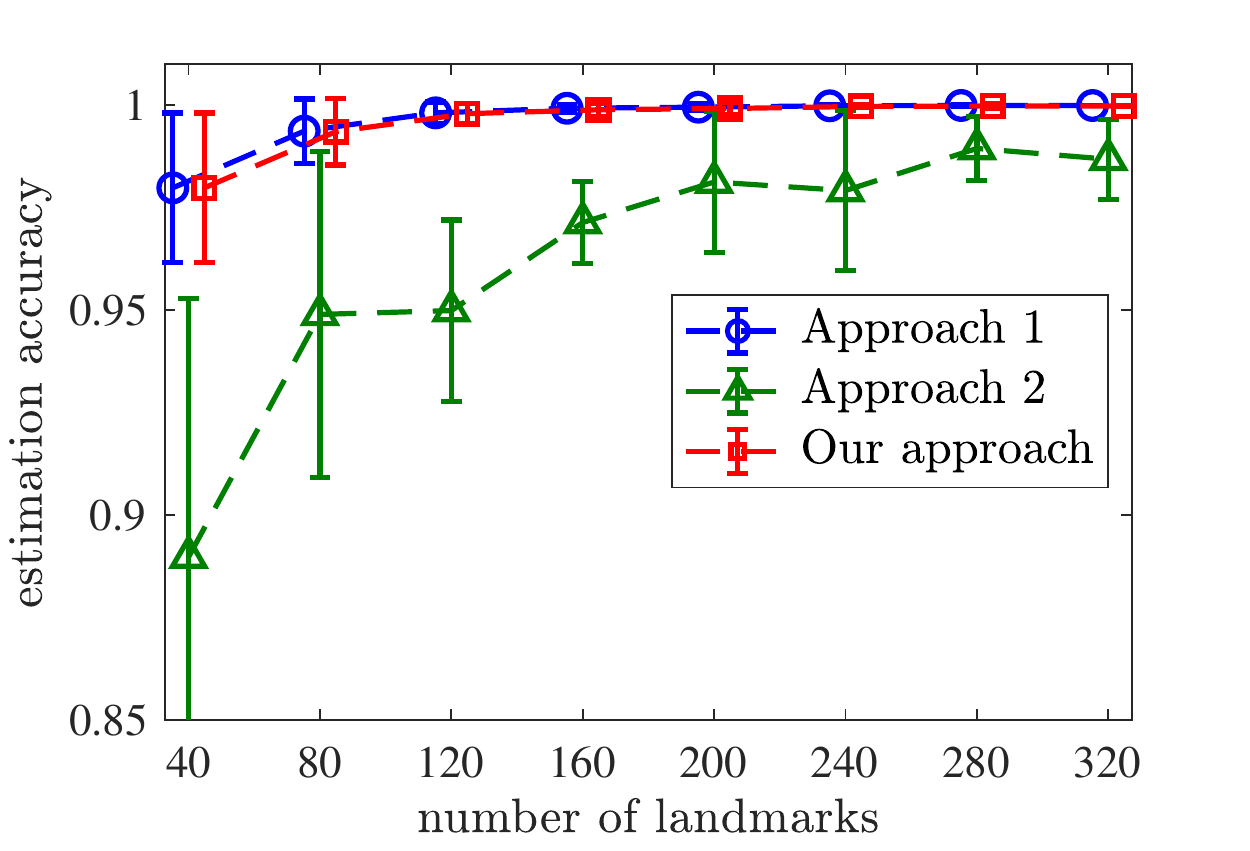}
	}
	\subfloat[][\label{fig:prio_time} running time]{
		\includegraphics[width=0.5\linewidth]{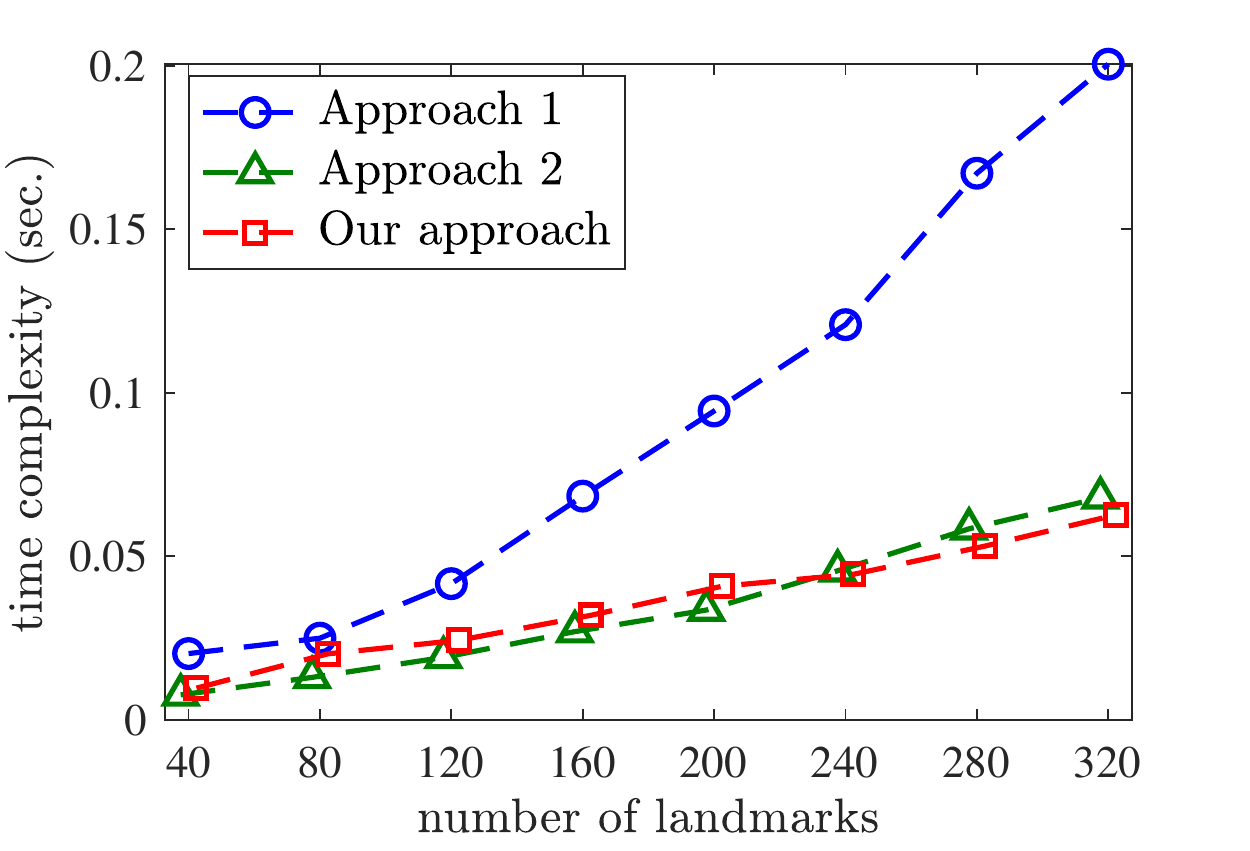}
	} 
	\caption{\label{fig:prior}
		Comparing accuracy and time complexity of our approach with the  previous research (Approach 1 and 2 refer to \cite{fowlkes2004spectral} and \cite{li2011time}, respectively). 
	}
\end{figure*}

In order to improve the scalability of integrating the Nystr\"om method with spectral clustering, another approach was proposed in \cite{li2011time}. The main idea is to treat the inner matrix $\mathbf{W}$ as a proxy for the full kernel matrix. That is, one computes the best rank-$k$ approximation of the normalized matrix $\mathbf{W}$ and then the solution will be lifted from $\mathbb{R}^m$ back to $\mathbb{R}^n$ while using the matrix $\mathbf{C}\in\mathbb{R}^{n\times m}$ only once. To be formal, the first step is to compute the following EVD: 
\begin{equation}
\overline{\mathbf{W}}:=\mathbf{D}_{m}^{-1/2}\mathbf{W}\mathbf{D}_{m}^{-1/2}=\mathbf{U}_{\overline{\mathbf{W}}}\boldsymbol{\Sigma}_{\overline{\mathbf{W}}}\mathbf{U}_{\overline{\mathbf{W}}}^T,
\end{equation}
where $\mathbf{D}_m=\text{diag}(\mathbf{W}\mathbf{1}_m)\in\mathbb{R}^{m\times m}$ is the degree matrix associated with $\mathbf{W}$. This approach then immediately utilizes the best rank-$k$ approximation of $\overline{\mathbf{W}}$ to generate a rank-$k$ approximation of the modified kernel matrix $\widehat{\mathbf{M}}$ as follows: 
\begin{equation}
\widehat{\mathbf{U}}_{\mathbf{M}}^{nys2}:=\mathbf{D}_n^{-1/2}\mathbf{Q}\in\mathbb{R}^{n\times k},\;\widehat{\boldsymbol{\Sigma}}_{\mathbf{M}}^{nys2}:=\boldsymbol{\Sigma}_{\overline{\mathbf{W}},k}\in\mathbb{R}^{k\times k}.\label{eq:eig2}
\end{equation}
In this equation, we defined the following matrix, which consists of multiplication of the matrix $\mathbf{C}$ by the leading eigenvalues and eigenvectors of the normalized matrix $\overline{\mathbf{W}}$:
\begin{equation}
\mathbf{Q}:=\mathbf{C}\mathbf{D}_m^{-1/2}\mathbf{U}_{\overline{\mathbf{W}},k}\boldsymbol{\Sigma}_{\overline{\mathbf{W}},k}^{-1}\in\mathbb{R}^{n\times k},
\end{equation}
and $\mathbf{D}_n=\text{diag}\big(\mathbf{Q}\boldsymbol{\Sigma}_{\overline{\mathbf{W}},k}\mathbf{Q}^T\mathbf{1}_n\big)\in\mathbb{R}^{n\times n}$ is a  diagonal degree matrix, which takes linear time concerning the data size $n$. 

Therefore, unlike the approach in \cite{fowlkes2004spectral}, this method requires a single pass over the matrix $\mathbf{C}$, which is a notable gain when the number of samples $n$ is large. However, this reduction of time complexity leads to two drawbacks. Information loss may occur because of performing the strict rank reduction step on the inner matrix $\mathbf{W}$ without taking into consideration the structure of $\mathbf{C}$. Also, the produced eigenvectors $\widehat{\mathbf{U}}_{\mathbf{M}}^{nys2}$ in \eqref{eq:eig2} are not guaranteed to be orthogonal. The authors in \cite{li2011time} proposed an orthogonalization step to tackle this problem.

In a nutshell, we discussed two related Nystr\"om-based techniques that seek to accelerate the prototypical spectral clustering algorithm. The first approach builds on the exact spectral decomposition of the modified kernel matrix. In contrast, the second approach reduces the computational complexity while compromising accuracy by prematurely restricting the rank of the kernel matrix associated with the landmark set. To further illustrate the merits and limitations of these two approaches, we present a numerical experiment to report both accuracy and time complexity as the number of landmarks increases. In this experiment, we use a benchmark data set from LIBSVM \cite{CC01a}, named mushrooms, which consists of $n=8,\!124$ samples with $d=112$ attributes. In Fig.~\ref{fig:prior}, we report the results of our accuracy and timing comparison as a function of the number of landmarks $m$. Since the landmark selection process involves uniform sampling, we use $50$ independent trials for each value of $m$, and we fix the kernel parameter $\sigma=3.5$. Moreover, the mushrooms data set contains two classes, i.e., $k=2$,  and we thus investigate the accuracy of estimating the two leading eigenvectors of the modified kernel matrix $\mathbf{M}$ as follows \cite{fowlkes2004spectral}:
\begin{equation}
\frac{1}{2}\|\big(\widehat{\mathbf{U}}_{\mathbf{M},2}^{nys}\big)^T\mathbf{U}_{\mathbf{M},2}\|_F^2.
\end{equation}
When the two matrices $\widehat{\mathbf{U}}_{\mathbf{M},2}^{nys}, \mathbf{U}_{\mathbf{M},2}\in\mathbb{R}^{n\times 2}$ comprising the leading eigenvectors are identical, this metric reaches its maximum and is equal to $1$. Also, higher values indicate more accurate estimates of the leading eigenvectors for partitioning the embedded data in the last step of spectral clustering. 

Fig.~\ref{fig:prior_acc} shows that the first approach achieves higher accuracy levels than the second method introduced in \cite{li2011time}. This observation is consistent with our previous discussion because the method proposed in \cite{fowlkes2004spectral} does not enforce a strict rank reduction step. However, according to Fig.~\ref{fig:prio_time}, the first approach suffers from high computational complexity, which is a scalability barrier for clustering massive data sets. 

To be formal, let us compare the time complexity of these two approaches. We ignore the shared cost of forming $\mathbf{C}$ and $\mathbf{W}$ in the Nystr\"om method, which is $\mathcal{O}(nmd)$  arithmetic operations. Moreover, we only report dominant costs involving the number of data points $n$ (e.g., we remove $\mathcal{O}(m^3)$ for the EVD of $m\times m$ matrices). The time complexity of the first approach is $\mathcal{O}(nm^2)$ to form $\mathbf{R}$ and the estimated eigenvectors based on \eqref{eq:R} and \eqref{eq:lift1}. However, the second approach takes $\mathcal{O}(nmk)$ operations  based on \eqref{eq:eig2}. Although the cost of both approaches scales linearly as a function of $n$, the time complexity of the first approach scales quadratically with the number of landmarks $m$. To address this problem, we propose a new approach that provides tunable trade-offs between accuracy and efficiency. As we see in Fig.~\ref{fig:prior}, our approach generates accurate estimates of the leading eigenvectors, and its time complexity is similar to that of the efficient approach in \cite{li2011time}. Another shortcoming of the previous research is the lack of theoretical guarantees for the perturbation analysis of the modified kernel matrix $\mathbf{M}$ and its eigenvectors because of leveraging the Nystr\"om approximation. 

\section{The Proposed  Method}\label{sec:main}
This paper presents a systematic treatment of utilizing the Nystr\"om method for improving the accuracy and scalability of approximate spectral clustering. The proposed approach replaces the kernel matrix $\mathbf{K}$ in Alg.~\ref{alg:SC} with its Nystr\"om approximation and  automatically exploits decay in the spectrum of the inner matrix $\mathbf{W}$. We will  take advantage of the extracted information to construct an approximation of the modified kernel matrix in the form of $\mathbf{M}\approx\widetilde{\mathbf{G}}\widetilde{\mathbf{G}}^T$, where $\widetilde{\mathbf{G}}\in\mathbb{R}^{n\times l}$ and $l$ represents the underlying low-rank structure of $\mathbf{W}$. This strategy allows us to use standard SVD solvers for computing the $k$ leading left singular vectors of $\widetilde{\mathbf{G}}$ to obtain the spectral embedding of the original data in linear time concerning both the number of data points and the size of the landmark set.  We also offer new theoretical results for the perturbation analysis of the (modified) kernel matrix due to exploiting the low-rank structure of $\mathbf{W}$ and utilizing the Nystr\"om method $\mathbf{C}\mathbf{W}^\dagger\mathbf{C}^T$. Furthermore, we present two numerical experiments to verify the assumptions made in the analysis and investigate the obtained upper bounds. While the presented results are based on the spectral norm, our analysis can be easily generalized to other unitarily invariant matrix norms, including the Frobenius norm.
\subsection{The Proposed Scalable Spectral Clustering Algorithm}
The first step of our proposed approach exploits decay in the spectrum of the inner matrix $\mathbf{W}\in\mathbb{R}^{m\times m}$, which is obtained from the Nystr\"om approximation $\widehat{\mathbf{K}}=\mathbf{C}\mathbf{W}^\dagger\mathbf{C}^T$, by computing its spectral decomposition as follows:
\begin{equation}
\mathbf{W}=\mathbf{U}_{\mathbf{W}}\boldsymbol{\Sigma}_{\mathbf{W}}\mathbf{U}_{\mathbf{W}}^T,
\end{equation}
where the matrix $\boldsymbol{\Sigma}_{\mathbf{W}}=\text{diag}([\sigma_1(\mathbf{W}),\ldots,\sigma_m(\mathbf{W})])$ contains the decaying spectrum of $\mathbf{W}$. A key aspect of our approach is that eigenvectors corresponding to small eigenvalues generate very little error for a low-rank approximation of $\mathbf{W}$. Therefore, we propose to retain eigenvectors whose corresponding eigenvalues are above a threshold $0<\gamma<1$:
\begin{equation}
l=\max\Big\{i\in\mathbb{N}:\frac{\sigma_i(\mathbf{W})}{\sigma_1(\mathbf{W})}\geq \gamma,\;i\leq m\Big\}.\label{eq:determinel}
\end{equation} 
The parameter $\gamma$ should be chosen so that the obtained $l$ exceeds the number of clusters $k$. Hence, the first step of our approach utilizes the decaying spectrum of the inner matrix for replacing $\mathbf{W}$ with its best rank-$l$ approximation $\llbracket \mathbf{W}\rrbracket_l=\mathbf{U}_{\mathbf{W},l}\boldsymbol{\Sigma}_{\mathbf{W},l}\mathbf{U}_{\mathbf{W},l}^T$, where $\mathbf{U}_{\mathbf{W},l}\in\mathbb{R}^{m\times l}$ and $\boldsymbol{\Sigma}_{\mathbf{W},l}\in\mathbb{R}^{l\times l}$ are the $l$ leading eigenvectors and eigenvalues, respectively. Therefore, the parameter $\gamma$ allows controlling the amount of spectral energy captured by the low-rank approximation (note that $\|\mathbf{W}\|_2=\sigma_1(\mathbf{W})$):
\begin{equation}
\|\mathbf{W}-\llbracket \mathbf{W}\rrbracket_l\|_2=\sigma_{l+1}(\mathbf{W})< \gamma \|\mathbf{W}\|_2.
\end{equation}

Next, we generate an approximation of $\widehat{\mathbf{K}}$ as follows:
\begin{equation}
\widehat{\mathbf{K}}\approx \mathbf{C}\llbracket\mathbf{W}\rrbracket_l^\dagger\mathbf{C}^T=\mathbf{G}\mathbf{G}^T,\;\mathbf{G}:=\mathbf{C}\mathbf{U}_{\mathbf{W},l}\boldsymbol{\Sigma}_{\mathbf{W},l}^{-1/2}\in\mathbb{R}^{n\times l}. \label{eq:G}
\end{equation}
The next step is to find the diagonal degree matrix $\widehat{\mathbf{D}}$  without explicitly computing $\mathbf{G}\mathbf{G}^T$:
\begin{equation}
\widehat{\mathbf{D}}=\text{diag}\Big(\mathbf{G}\big(\mathbf{G}^T\mathbf{1}_n\big)\Big)\in\mathbb{R}^{n\times n},
\end{equation}
which shows that we can compute $\widehat{\mathbf{D}}$ by two matrix-vector multiplications. We thus  approximate the modified kernel matrix using the two matrices $\mathbf{G}$ and $\widehat{\mathbf{D}}$ in the following form:
\begin{equation}
\mathbf{M}\approx \widetilde{\mathbf{G}}\widetilde{\mathbf{G}}^T,\;\widetilde{\mathbf{G}}:=\widehat{\mathbf{D}}^{-1/2}\mathbf{G}\in\mathbb{R}^{n\times l}.
\end{equation}
The last step involves computing the $k$ leading left singular vectors of $\widetilde{\mathbf{G}}$ due to the following relationship between the SVD of $\widetilde{\mathbf{G}}=\mathbf{U}_{\widetilde{\mathbf{G}}}\boldsymbol{\Sigma}_{\widetilde{\mathbf{G}}}\mathbf{V}_{\widetilde{\mathbf{G}}}^T$ and the EVD of $\widetilde{\mathbf{G}}\widetilde{\mathbf{G}}^T$:
\begin{equation}
\widetilde{\mathbf{G}}\widetilde{\mathbf{G}}^T=\mathbf{U}_{\widetilde{\mathbf{G}}}\boldsymbol{\Sigma}_{\widetilde{\mathbf{G}}}^2\mathbf{U}_{\widetilde{\mathbf{G}}}^T,\;\mathbf{U}_{\widetilde{\mathbf{G}}}^T\mathbf{U}_{\widetilde{\mathbf{G}}}=\mathbf{I}_l.
\end{equation}

The proposed spectral clustering algorithm is summarized in Alg.~\ref{alg:SCour}. The key advantage of our approach is that it requires  a single pass over the matrix $\mathbf{C}\in\mathbb{R}^{n\times m}$, which encodes similarities between all the input data points and the landmark set. Given $\mathbf{C}$ and $\mathbf{W}$, the dominant computational cost for forming the matrix $\mathbf{G}$ in \eqref{eq:G}  is $\mathcal{O}(nml)$  arithmetic operations. Moreover, unlike the previous approach in \cite{li2011time}, we do not enforce a strict rank-$k$ approximation of the inner matrix $\mathbf{W}$. Instead, we exploit decay in the spectrum of $\mathbf{W}$, and the cost of forming $\mathbf{G}$ scales linearly with the underlying structure of the matrix $\mathbf{W}$. Also, it takes linear time in $n$ to compute the $k$ leading left singular vectors of $\widetilde{\mathbf{G}}$. Hence, the parameter $\gamma$ provides tunable trade-offs between efficiency and accuracy of the produced spectral embedding for large data sets. 
While we assumed that the computational cost of steps that do not involve $n$ is not a scalability barrier, one can utilize fast solvers for accelerating the computation of the spectral decomposition of the inner matrix $\mathbf{W}$ in the first step of our approach. For example, the previous work \cite{li2014large} employed randomized low-rank matrix approximation algorithms to compute the spectral decomposition of $\mathbf{W}$ in the Nystr\"om method. Thus, we can apply the same strategy to further accelerate our proposed spectral clustering algorithm.

\begin{algorithm}[t]
	\textbf{Input:}  data set $\mathcal{X}=\{\mathbf{x}_1,\ldots,\mathbf{x}_n\}$, landmark set $\mathcal{Z}=\{\mathbf{z}_1,\ldots,\mathbf{z}_m\}$, kernel parameter $\sigma$, number of clusters $k$, threshold parameter $\gamma$. 
	\begin{algorithmic}[1]
		\Function{SCNys}{$\mathcal{X},\mathcal{Z}, \sigma, k, \gamma$}
		\State Form $\mathbf{C}\in\mathbb{R}^{n\times m}$ and $\mathbf{W}\in\mathbb{R}^{m\times m}$, where $[\mathbf{C}]_{ij}=\kappa(\mathbf{x}_i,\mathbf{z}_j)$ and $[\mathbf{W}]_{ij}=\kappa(\mathbf{z}_i,\mathbf{z}_j)$;
		\State Compute the exact (or approximate) EVD of $\mathbf{W}=\mathbf{U}_{\mathbf{W}}\boldsymbol{\Sigma}_{\mathbf{W}}\mathbf{U}_{\mathbf{W}}^T$ and find the value of $l$ as in \eqref{eq:determinel};
		\State Construct $\mathbf{G}=\mathbf{C}\mathbf{U}_{\mathbf{W},l}\boldsymbol{\Sigma}_{\mathbf{W},l}^{-1/2}\in\mathbb{R}^{n\times l}$;
		\State Compute the degree matrix $\widehat{\mathbf{D}}=\text{diag}\Big(\mathbf{G}\big(\mathbf{G}^T\mathbf{1}_n\big)\Big)$;
		\State Find the $k$ leading left singular vectors of $\widetilde{\mathbf{G}}=\widehat{\mathbf{D}}^{-1/2}\mathbf{G}$ to obtain $\widehat{\mathbf{U}}_{\mathbf{M}}^{nys}=\mathbf{U}_{\widetilde{\mathbf{G}},k}\in\mathbb{R}^{n\times k}$;
		\State Normalize each row of $\widehat{\mathbf{U}}_{\mathbf{M}}^{nys}$ to have unit length;
		\State Perform K-means clustering over the rows of the normalized matrix $\widehat{\mathbf{U}}_{\mathbf{M}}^{nys}$;
		
		{
			\Return Clustering results.
		}
		\caption{Proposed Spectral Clustering Algorithm}\label{alg:SCour}
		\EndFunction
	\end{algorithmic}
\end{algorithm}

\subsection{Theoretical Results}
The goal of this section is to present theoretical insights and develop a better understanding of the role of the two low-rank approximations that we leveraged in our Nystr\"om-based spectral clustering algorithm. As a reference, the modified kernel matrix of the original data that we introduced in \eqref{eq:M} is $\mathbf{M}=\mathbf{D}^{-1/2}\mathbf{K}\mathbf{D}^{-1/2}$, and our method substitutes the kernel matrix $\mathbf{K}$ with the Nystr\"om approximation $\widehat{\mathbf{K}}=\mathbf{C}\mathbf{W}^\dagger\mathbf{C}^T$. While various theoretical guarantees exist on the quality of the Nystr\"om approximation, e.g., upper bounds for $\|\mathbf{K}-\widehat{\mathbf{K}}\|$ with respect to unitarily invariant matrix norms \cite{gittens2016revisiting}, a serious concern is that utilizing this approximation may have an out-sized influence on the normalized kernel matrix $\mathbf{M}$ due to the perturbation of the degree matrix.  The lack of theoretical understanding regarding the approximation quality of $\mathbf{M}$ is a critical disadvantage of Nystr\"om-based spectral clustering.

Moreover, the first step of our approach exploits the rank-$l$ approximation of the inner matrix $\mathbf{W}$. While the  incurred error $\|\mathbf{W}-\llbracket\mathbf{W}\rrbracket_l\|_2$ is negligible for small values of $\gamma$, this approximation does not necessarily guarantee that the error term $\|\widehat{\mathbf{K}}-\mathbf{C}\llbracket\mathbf{W}\rrbracket_l^\dagger\mathbf{C}^T\|_2$ is small. To explain this point, note that for any invertible matrices $\mathbf{A}$ and $\widehat{\mathbf{A}}$, we have $\widehat{\mathbf{A}}^{-1}-\mathbf{A}^{-1}=\widehat{\mathbf{A}}^{-1}(\mathbf{A}-\widehat{\mathbf{A}})\mathbf{A}^{-1}$. Thus, the small norm of $(\mathbf{A}-\widehat{\mathbf{A}})$ cannot be directly used to conclude their inverses are close to each other or even bounded. In this section, we start with providing theoretical guarantees for understanding the influence of the rank-$l$ approximation of $\mathbf{W}$. 

\begin{theorem}[Rank-$l$ approximation of $\mathbf{W}$] Consider a set of $n$ distinct data points $\mathcal{X}=\{\mathbf{x}_1,\ldots,\mathbf{x}_n\}$ and a landmark set $\mathcal{Z}=\{\mathbf{z}_1,\ldots,\mathbf{z}_m\}$ sampled from $\mathcal{X}$ using a set of indices $\mathcal{I}\subseteq\{1,\ldots,n\}$. Let $\mathbf{K}\in\mathbb{R}^{n\times n}$ denote the kernel matrix associated with $\mathcal{X}$ using the Gaussian kernel function $\kappa(\mathbf{x}_i,\mathbf{x}_j)=\exp(-\|\mathbf{x}_i-\mathbf{x}_j\|_2^2/\sigma^2)$. Also, let $\mathbf{P}\in\mathbb{R}^{n\times m}$ be a subset of the columns of $\mathbf{I}_n$ selected according to the set of indices $\mathcal{I}$. The Nystr\"om method computes $\mathbf{C}=\mathbf{K}\mathbf{P}\in\mathbb{R}^{n\times m}$ and $\mathbf{W}=\mathbf{P}^T\mathbf{K}\mathbf{P}\in\mathbb{R}^{m\times m}$ to form $\mathbf{C}\mathbf{W}^\dagger\mathbf{C}^T$. The error incurred by the best rank-$l$ approximation of the inner matrix $\mathbf{W}$  can be characterized as follows for any $l < m$:
\begin{equation}
\mathbf{C}\big(\mathbf{W}^\dagger-\llbracket\mathbf{W}\rrbracket_l^\dagger\big)\mathbf{C}^T=\mathbf{K}^{1/2}\big(\mathbf{U}_{\mathbf{F}}\mathbf{U}_{\mathbf{F}}^T-\mathbf{U}_{\mathbf{F},l}\mathbf{U}_{\mathbf{F},l}^T\big)\mathbf{K}^{1/2},\label{eq:thm1}
\end{equation}
where $\mathbf{U}_{\mathbf{F}}\in\mathbb{R}^{n\times m}$ represents the left singular vectors of $\mathbf{F}:=\mathbf{K}^{1/2}\mathbf{P}$ and  $\mathbf{U}_{\mathbf{F},l}\in\mathbb{R}^{n\times l}$ denotes its first $l$ columns. 
\end{theorem}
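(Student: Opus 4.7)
The plan is to reduce both $\mathbf{C}\mathbf{W}^\dagger\mathbf{C}^T$ and $\mathbf{C}\llbracket\mathbf{W}\rrbracket_l^\dagger\mathbf{C}^T$ to simple projection-like expressions in the eigenspace of $\mathbf{K}^{1/2}$ by exploiting the symmetric factorization $\mathbf{K}=\mathbf{K}^{1/2}\mathbf{K}^{1/2}$, which is well defined because $\mathbf{K}$ is positive semi-definite. Using $\mathbf{C}=\mathbf{K}\mathbf{P}$ and $\mathbf{W}=\mathbf{P}^T\mathbf{K}\mathbf{P}$ from the theorem statement, I would first rewrite
\begin{equation}
\mathbf{C}=\mathbf{K}^{1/2}\mathbf{F}, \qquad \mathbf{W}=\mathbf{F}^T\mathbf{F},
\end{equation}
where $\mathbf{F}:=\mathbf{K}^{1/2}\mathbf{P}\in\mathbb{R}^{n\times m}$. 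This puts both matrices into a form that shares the common factor $\mathbf{F}$, so that pseudo-inverses of $\mathbf{W}$ can be translated directly into singular-vector projectors of $\mathbf{F}$.

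Next I would introduce the truncated SVD $\mathbf{F}=\mathbf{U}_{\mathbf{F}}\boldsymbol{\Sigma}_{\mathbf{F}}\mathbf{V}_{\mathbf{F}}^T$ with $\mathbf{U}_{\mathbf{F}}\in\mathbb{R}^{n\times m}$ and $\mathbf{V}_{\mathbf{F}}\in\mathbb{R}^{m\times m}$ orthogonal (since $\mathbf{W}$ has full rank $m$ for distinct landmarks under a Gaussian kernel, the singular values of $\mathbf{F}$ are strictly positive, so the pseudo-inverses are genuine inverses on the relevant subspaces). Substituting this SVD yields $\mathbf{W}^\dagger=\mathbf{V}_{\mathbf{F}}\boldsymbol{\Sigma}_{\mathbf{F}}^{-2}\mathbf{V}_{\mathbf{F}}^T$, and a direct multiplication collapses to
\begin{equation}
\mathbf{C}\mathbf{W}^\dagger\mathbf{C}^T=\mathbf{K}^{1/2}\mathbf{U}_{\mathbf{F}}\boldsymbol{\Sigma}_{\mathbf{F}}\boldsymbol{\Sigma}_{\mathbf{F}}^{-2}\boldsymbol{\Sigma}_{\mathbf{F}}\mathbf{U}_{\mathbf{F}}^T\mathbf{K}^{1/2}=\mathbf{K}^{1/2}\mathbf{U}_{\mathbf{F}}\mathbf{U}_{\mathbf{F}}^T\mathbf{K}^{1/2},
\end{equation}
using $\mathbf{V}_{\mathbf{F}}^T\mathbf{V}_{\mathbf{F}}=\mathbf{I}_m$. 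I would repeat the same computation with $\llbracket\mathbf{W}\rrbracket_l^\dagger=\mathbf{V}_{\mathbf{F},l}\boldsymbol{\Sigma}_{\mathbf{F},l}^{-2}\mathbf{V}_{\mathbf{F},l}^T$, using the fact that $\mathbf{F}\mathbf{V}_{\mathbf{F},l}=\mathbf{U}_{\mathbf{F},l}\boldsymbol{\Sigma}_{\mathbf{F},l}$ because $\mathbf{V}_{\mathbf{F}}^T\mathbf{V}_{\mathbf{F},l}$ selects the first $l$ standard basis vectors, to obtain $\mathbf{C}\llbracket\mathbf{W}\rrbracket_l^\dagger\mathbf{C}^T=\mathbf{K}^{1/2}\mathbf{U}_{\mathbf{F},l}\mathbf{U}_{\mathbf{F},l}^T\mathbf{K}^{1/2}$.

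Subtracting the two identities immediately yields the claimed formula \eqref{eq:thm1}. The whole argument is algebraic rather than analytic, so the main conceptual step is really the first one: recognizing that the symmetric square root of $\mathbf{K}$ provides a common factorization that exposes the pseudo-inverse of $\mathbf{W}$ as the orthogonal projector onto the column span of $\mathbf{F}$. Once that bridge is made, the rank-$l$ truncation on the $\mathbf{W}$ side simply corresponds to truncating the projector $\mathbf{U}_{\mathbf{F}}\mathbf{U}_{\mathbf{F}}^T$ to $\mathbf{U}_{\mathbf{F},l}\mathbf{U}_{\mathbf{F},l}^T$. The only delicate point is the treatment of the pseudo-inverse: one should verify that $\mathbf{W}$ is invertible (guaranteed in the Gaussian-kernel setting with distinct landmarks cited earlier in the paper) so that the step $\mathbf{W}^\dagger=\mathbf{V}_{\mathbf{F}}\boldsymbol{\Sigma}_{\mathbf{F}}^{-2}\mathbf{V}_{\mathbf{F}}^T$ is unambiguous; otherwise one would have to intersect with the range of $\mathbf{F}^T$, which would complicate the bookkeeping but not the conclusion.
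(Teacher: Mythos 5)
Your proposal is correct and follows essentially the same route as the paper: both introduce $\mathbf{F}=\mathbf{K}^{1/2}\mathbf{P}$, use its SVD to collapse $\mathbf{C}\mathbf{W}^\dagger\mathbf{C}^T$ and $\mathbf{C}\llbracket\mathbf{W}\rrbracket_l^\dagger\mathbf{C}^T$ into $\mathbf{K}^{1/2}\mathbf{U}_{\mathbf{F}}\mathbf{U}_{\mathbf{F}}^T\mathbf{K}^{1/2}$ and $\mathbf{K}^{1/2}\mathbf{U}_{\mathbf{F},l}\mathbf{U}_{\mathbf{F},l}^T\mathbf{K}^{1/2}$, and subtract. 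The only cosmetic difference is that you invoke $\mathbf{F}\mathbf{V}_{\mathbf{F},l}=\mathbf{U}_{\mathbf{F},l}\boldsymbol{\Sigma}_{\mathbf{F},l}$ directly where the paper writes out the block structure of the intermediate matrix $\mathbf{T}$; these are the same calculation.
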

\begin{proof}
Let us start with the truncated SVD of $\mathbf{F}=\mathbf{U}_{\mathbf{F}}\boldsymbol{\Sigma}_{\mathbf{F}}\mathbf{V}_{\mathbf{F}}^T$, which allows us to rewrite $\mathbf{C}$ as follows:
\begin{equation}
\mathbf{C}=\mathbf{K}^{1/2}(\mathbf{K}^{1/2}\mathbf{P})=\mathbf{K}^{1/2}\mathbf{F}=\mathbf{K}^{1/2}\mathbf{U}_{\mathbf{F}}\boldsymbol{\Sigma}_{\mathbf{F}}\mathbf{V}_{\mathbf{F}}^T,\label{eq:CF}
\end{equation}
and we get the following representation for $\mathbf{W}$:
\begin{equation}
\mathbf{W}=(\mathbf{P}^T\mathbf{K}^{1/2})(\mathbf{K}^{1/2}\mathbf{P})=\mathbf{F}^T\mathbf{F}=\mathbf{V}_{\mathbf{F}}\boldsymbol{\Sigma}_{\mathbf{F}}^2\mathbf{V}_{\mathbf{F}}^T.\label{eq:WF}
\end{equation}
Using \eqref{eq:CF} and \eqref{eq:WF}, it is straightforward to show that:
\begin{equation}
\mathbf{C}\mathbf{W}^\dagger\mathbf{C}^T=\mathbf{K}^{1/2}\mathbf{U}_{\mathbf{F}}\mathbf{U}_{\mathbf{F}}^T\mathbf{K}^{1/2},
\end{equation}
where we used $\mathbf{V}_{\mathbf{F}}^T\mathbf{V}_{\mathbf{F}}=\mathbf{V}_{\mathbf{F}}\mathbf{V}_{\mathbf{F}}^T=\mathbf{I}_m$ since the kernel matrix $\mathbf{W}$ associated with $\mathcal{Z}$ is full-rank when employing the Gaussian kernel function on a set of distinct data points. 

Next, we use the best rank-$l$ approximation of $\mathbf{W}$, i.e., $\llbracket\mathbf{W}\rrbracket_l=\mathbf{V}_{\mathbf{F},l}\boldsymbol{\Sigma}_{\mathbf{F},l}^2\mathbf{V}_{\mathbf{F},l}^T$, to simplify the Nystr\"om approximation after replacing $\mathbf{W}$ with $\llbracket\mathbf{W}\rrbracket_l$ as follows:
\begin{equation}
\mathbf{C}\llbracket\mathbf{W}\rrbracket_l^\dagger\mathbf{C}^T=\mathbf{K}^{1/2}\mathbf{U}_\mathbf{F}\mathbf{T}\mathbf{U}_\mathbf{F}^T\mathbf{K}^{1/2},
\end{equation}
where $\mathbf{T}:=\boldsymbol{\Sigma}_{\mathbf{F}}\mathbf{V}_{\mathbf{F}}^T\mathbf{V}_{\mathbf{F},l}\boldsymbol{\Sigma}_{\mathbf{F},l}^{-2}\mathbf{V}_{\mathbf{F},l}^T\mathbf{V}_{\mathbf{F}}\boldsymbol{\Sigma}_{\mathbf{F}}\in\mathbb{R}^{m\times m}$. Note that the right singular vectors of $\mathbf{F}$ can be decomposed as:
\begin{equation}
\mathbf{V}_{\mathbf{F},l}^T\mathbf{V}_{\mathbf{F}}=\begin{bmatrix}\mathbf{V}_{\mathbf{F},l}^T\mathbf{V}_{\mathbf{F},l} &\mathbf{V}_{\mathbf{F},l}^T\mathbf{V}_{\mathbf{F},l}^\perp \end{bmatrix}=\begin{bmatrix}\mathbf{I}_l&\mathbf{0}_{l\times (m-l)}\end{bmatrix},
\end{equation}
where $\mathbf{V}_{\mathbf{F},l}^\perp\in\mathbb{R}^{m\times (m-l)}$ represents the tailing $(m-l)$ right singular vectors of $\mathbf{F}$ and we used the fact that the columns of $\mathbf{V}_{\mathbf{F}}$ are orthogonal. Hence, we see that $\mathbf{T}$ has a block structure:
\begin{equation}
\mathbf{T}=\begin{bmatrix}\mathbf{I}_{l\times l} & \mathbf{0}_{l\times (m-l)}\\\mathbf{0}_{(m-l)\times l}&\mathbf{0}_{(m-l)\times (m-l)}\end{bmatrix}.
\end{equation}
As a result, we have $\mathbf{C}\llbracket\mathbf{W}\rrbracket_l^\dagger\mathbf{C}^T=\mathbf{K}^{1/2}\mathbf{U}_{\mathbf{F},l}\mathbf{U}_{\mathbf{F},l}^T\mathbf{K}^{1/2}$,  which completes the proof. 
\end{proof}

The presented result in \eqref{eq:thm1} allows us to develop a better understanding of utilizing the rank-$l$ approximation of $\mathbf{W}$ via the SVD of $\mathbf{F}$ without any matrix inversion. Based on this result and using standard inequalities for the spectral norm, we conclude the error incurred by the low-rank approximation is always bounded for any value of $l\in\{1,2,\ldots,m\}$: 
\begin{equation}
\|\mathbf{C}\big(\mathbf{W}^\dagger-\llbracket\mathbf{W}\rrbracket_l^\dagger\big)\mathbf{C}^T\|_2\leq\underbrace{\|\mathbf{K}^{1/2}\|_2^2}_{=\|\mathbf{K}\|_2}\underbrace{\|\mathbf{U}_{\mathbf{F}}\mathbf{U}_{\mathbf{F}}^T-\mathbf{U}_{\mathbf{F},l}\mathbf{U}_{\mathbf{F},l}^T\|_2}_{=1},
\end{equation}
which can be simplified as:
\begin{equation}
\frac{\|\mathbf{C}\big(\mathbf{W}^\dagger-\llbracket\mathbf{W}\rrbracket_l^\dagger\big)\mathbf{C}^T\|_2}{\|\mathbf{K}\|_2}\leq 1.\label{eq:normalizedE}
\end{equation}
A significant advantage of this upper bound compared to prior results, such as the bound presented in \cite{drineas2005nystrom}, is that our theoretical analysis does not make any assumptions on the landmark selection process involved in the Nystr\"om method.  

\begin{figure}[t]
	\centering
	\includegraphics[width=\linewidth]{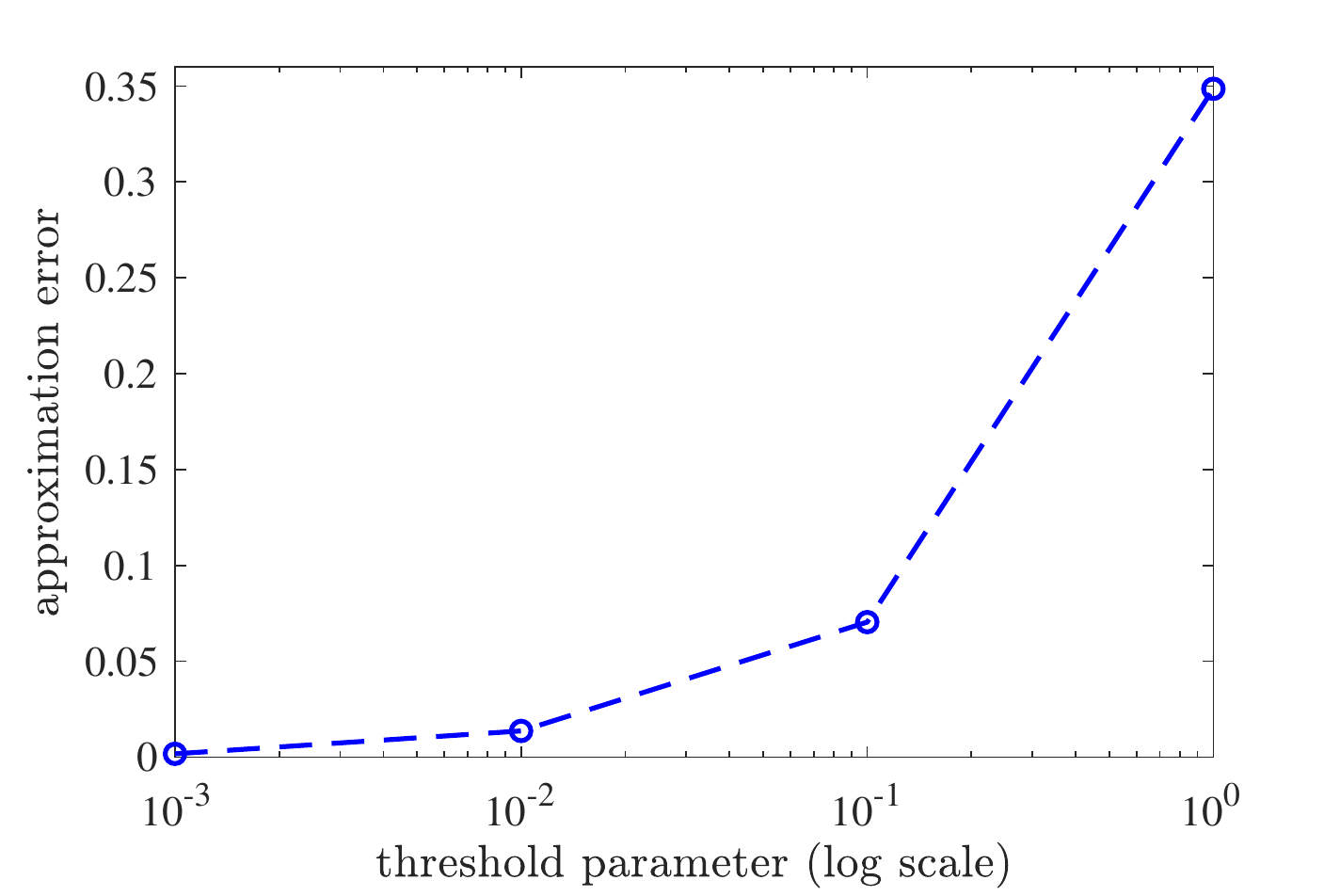}
	
	\caption{Investigating the influence of the threshold parameter $\gamma$ on the normalized approximation error for fixed $m=200$.}
	\label{fig:them1} 
\end{figure}

We provide numerical evidence measuring the normalized approximation error in \eqref{eq:normalizedE} using the mushrooms data set that we considered in Section \ref{sec:rel}.  Fig.~\ref{fig:them1} reports the mean approximation error over $50$ independent trials with fixed $m=200$ and varying values of the threshold parameter $\gamma\in\{10^{-3},10^{-2},10^{-1},10^0\}$. As discussed before, the normalized approximation error is less than $1$ even when we set the threshold parameter $\gamma=1$, yielding the rank-$1$ approximation of $\mathbf{W}$ under the assumption that $\sigma_1(\mathbf{W})>\sigma_2(\mathbf{W})$. In practice, one has the flexibility to reduce the value of $\gamma$ to lower the resulting rank-$l$ approximation error, achieving a trade-off between accuracy and scalability. As we increase $\gamma$ in this experiment, the mean values of the rank parameter $l$ are $196.6$, $76.6$, $6.2$, and $1$, respectively.

Our second theoretical analysis focuses on developing an upper bound for the spectral norm of the perturbation on the modified kernel matrix $\mathbf{M}$ when utilizing the Nystr\"om approximation. Therefore, our theoretical result reduces the current gap between a high-quality approximation of the kernel matrix $\mathbf{K}$ to a provably reasonable estimate of $\mathbf{M}$. Since we have the following relationship between the modified kernel matrix and the graph Laplacian matrix $\mathbf{L}=\mathbf{I}_n-\mathbf{M}$, our result immediately translates to the perturbation analysis of the graph Laplacian under the Nystr\"om approximation $\widehat{\mathbf{K}}=\mathbf{C}\mathbf{W}^\dagger\mathbf{C}^T$.  
\begin{theorem}[Low-rank approximation of $\mathbf{K}$\label{thm2}] Consider the kernel matrix $\mathbf{K}\in\mathbb{R}^{n\times n}$ representing pairwise similarities and the modified kernel matrix $\mathbf{M}=\mathbf{D}^{-1/2}\mathbf{K}\mathbf{D}^{-1/2}$, where $\mathbf{D}=\text{diag}(\mathbf{K}\mathbf{1}_n)$ is the diagonal degree matrix. Let $\widehat{\mathbf{K}}=\mathbf{K}+\mathbf{E}$ and $\widehat{\mathbf{D}}=\mathbf{D}+\boldsymbol{\Delta}$ denote the perturbed kernel matrix and its diagonal degree matrix, respectively. If the amount of perturbation is small in the sense that $\eta:=\|\boldsymbol{\Delta}\mathbf{D}^{-1}\|_2<1$, then the normalized difference between $\widehat{\mathbf{M}}:=\widehat{\mathbf{D}}^{-1/2}\widehat{\mathbf{K}}\widehat{\mathbf{D}}^{-1/2}$ and $\mathbf{M}$ is bounded in the spectral norm as follows:
\begin{equation}
\frac{\|\mathbf{M}-\widehat{\mathbf{M}}\|_2}{\|\mathbf{M}\|_2}\leq f_1+f_2,\label{eq:thm2up}
\end{equation}
where 
\begin{equation}
f_1:= (1+\eta+\mathcal{O}(\eta^2))\frac{\|\mathbf{D}^{-1/2}\mathbf{E}\mathbf{D}^{-1/2}\|_2}{\|\mathbf{M}\|_2},
\end{equation}
and 
\begin{equation}
f_2:=\eta+\mathcal{O}(\eta^2).
\end{equation}	
\end{theorem}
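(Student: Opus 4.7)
Proof proposal. The plan is to isolate the two sources of perturbation, the kernel perturbation $\mathbf{E}$ and the degree perturbation $\boldsymbol{\Delta}$, and to handle the latter via a matrix Taylor expansion (as advertised in the introduction). Because $\mathbf{D}$ and $\boldsymbol{\Delta}$ are both diagonal, they commute, so we can factor
$$\widehat{\mathbf{D}}^{-1/2} = \mathbf{D}^{-1/2}\bigl(\mathbf{I}_n + \mathbf{D}^{-1}\boldsymbol{\Delta}\bigr)^{-1/2}.$$
The hypothesis $\eta = \|\mathbf{D}^{-1}\boldsymbol{\Delta}\|_2 < 1$ places the argument strictly inside the convergence radius of the Taylor series for the matrix function $f(\mathbf{X}) = (\mathbf{I}+\mathbf{X})^{-1/2}$, giving
$$\bigl(\mathbf{I}_n + \mathbf{D}^{-1}\boldsymbol{\Delta}\bigr)^{-1/2} = \mathbf{I}_n - \tfrac{1}{2}\mathbf{D}^{-1}\boldsymbol{\Delta} + \mathcal{O}(\eta^2),$$
where $\mathcal{O}(\eta^2)$ is measured in spectral norm via a standard Neumann-type remainder bound.

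Next I would substitute this expansion into $\widehat{\mathbf{M}} = \widehat{\mathbf{D}}^{-1/2}(\mathbf{K}+\mathbf{E})\widehat{\mathbf{D}}^{-1/2}$. Since all diagonal factors commute with $\mathbf{D}^{-1/2}$, the right-hand side rearranges as $(\mathbf{I}_n + \mathbf{S})\bigl(\mathbf{M} + \mathbf{D}^{-1/2}\mathbf{E}\mathbf{D}^{-1/2}\bigr)(\mathbf{I}_n + \mathbf{S})$, with $\mathbf{S} := -\tfrac{1}{2}\mathbf{D}^{-1}\boldsymbol{\Delta} + \mathcal{O}(\eta^2)$ satisfying $\|\mathbf{S}\|_2 \le \tfrac{1}{2}\eta + \mathcal{O}(\eta^2)$. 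Expanding the product and subtracting $\mathbf{M}$ yields
$$\widehat{\mathbf{M}} - \mathbf{M} \;=\; \underbrace{(\mathbf{I}_n+\mathbf{S})\,\mathbf{D}^{-1/2}\mathbf{E}\mathbf{D}^{-1/2}(\mathbf{I}_n+\mathbf{S})}_{\text{kernel-perturbation part}} \;+\; \underbrace{\mathbf{S}\mathbf{M} + \mathbf{M}\mathbf{S} + \mathbf{S}\mathbf{M}\mathbf{S}}_{\text{degree-perturbation part}}.$$

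The final step is a routine application of the triangle inequality and submultiplicativity of $\|\cdot\|_2$. The kernel-perturbation part is bounded by $(1 + \|\mathbf{S}\|_2)^2\,\|\mathbf{D}^{-1/2}\mathbf{E}\mathbf{D}^{-1/2}\|_2$; collecting $(1 + \|\mathbf{S}\|_2)^2 = 1 + \eta + \mathcal{O}(\eta^2)$ and dividing by $\|\mathbf{M}\|_2$ recovers exactly $f_1$. The degree-perturbation part is bounded by $\bigl(2\|\mathbf{S}\|_2 + \|\mathbf{S}\|_2^2\bigr)\|\mathbf{M}\|_2 = \bigl(\eta + \mathcal{O}(\eta^2)\bigr)\|\mathbf{M}\|_2$, which after division by $\|\mathbf{M}\|_2$ gives $f_2$. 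Summing the two pieces delivers \eqref{eq:thm2up}.

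The main obstacle I anticipate is the bookkeeping of higher-order terms: one needs a clean justification that the tail of the Taylor series for $(\mathbf{I}+\mathbf{D}^{-1}\boldsymbol{\Delta})^{-1/2}$ is genuinely $\mathcal{O}(\eta^2)$ in spectral norm (which follows because $\eta < 1$ forces absolute convergence of the binomial series), and then a careful grouping of the nine terms produced by the double expansion so that the cross terms involving both $\mathbf{E}$ and $\mathbf{S}$ aggregate into the multiplicative factor $(1 + \eta + \mathcal{O}(\eta^2))$ in $f_1$ rather than leaking into $f_2$. Everything else, namely the commutativity of diagonal matrices, the triangle inequality, and submultiplicativity of the spectral norm, is mechanical.
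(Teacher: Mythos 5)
Your proposal is correct and follows essentially the same route as the paper: factor $\widehat{\mathbf{D}}^{-1/2}$ through $(\mathbf{I}_n+\boldsymbol{\Delta}\mathbf{D}^{-1})^{-1/2}$ using commutativity of diagonal matrices, apply the Taylor expansion of the inverse square root under $\eta<1$, expand the resulting product, and finish with the triangle inequality and submultiplicativity. Your bookkeeping via the single matrix $\mathbf{S}$ is just a more compact organization of the nine terms the paper writes out explicitly as $g_1+g_2+g_3$, and it yields the same $f_1$ and $f_2$.
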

\begin{proof}
Using $\widehat{\mathbf{K}}$ and $\widehat{\mathbf{D}}$, we express the perturbed modified kernel matrix $\widehat{\mathbf{M}}$ in the following form:
\begin{equation}
(\mathbf{I}_n+\boldsymbol{\Delta}\mathbf{D}^{-1})^{-1/2}\mathbf{D}^{-1/2}(\mathbf{K}+\mathbf{E})\mathbf{D}^{-1/2}(\mathbf{I}_n+\boldsymbol{\Delta}\mathbf{D}^{-1})^{-1/2},\label{eq:Mhat}
\end{equation}
where we used the following relationship: 
\begin{equation}
\mathbf{D}+\boldsymbol{\Delta}=\mathbf{D}(\mathbf{I}_n+\mathbf{D}^{-1}\boldsymbol{\Delta})=(\mathbf{I}_n+\boldsymbol{\Delta}\mathbf{D}^{-1})\mathbf{D}. 
\end{equation}
and  $\mathbf{D}^{-1}\boldsymbol{\Delta}=\boldsymbol{\Delta}\mathbf{D}^{-1}$ because both $\mathbf{D}$ and $\boldsymbol{\Delta}$ are diagonal matrices. Also, for any invertible matrices $\mathbf{A}$ and $\mathbf{B}$, the matrix product is invertible and we have $(\mathbf{A}\mathbf{B})^{-1}=\mathbf{B}^{-1}\mathbf{A}^{-1}$.

Next, we use the Taylor series for matrix functions under the assumption that the perturbation amount on the degree matrix satisfies $\eta=\|\boldsymbol{\Delta}\mathbf{D}^{-1}\|_2<1$, which yields \cite{yan2009fast}:
\begin{equation}
(\mathbf{I}+\boldsymbol{\Delta}\mathbf{D}^{-1})^{-1/2}=\mathbf{I}_n-\frac{1}{2}\boldsymbol{\Delta}\mathbf{D}^{-1}+\mathcal{O}((\boldsymbol{\Delta}\mathbf{D}^{-1})^2).\label{eq:taylor}
\end{equation}
We then substitute \eqref{eq:taylor} in the expression for the perturbed modified kernel matrix $\widehat{\mathbf{M}}$ given in \eqref{eq:Mhat}. It is straightforward to show that $\widehat{\mathbf{M}}=g_1+g_2+g_3$, where we have:
\begin{multline}
g_1=\mathbf{M}+\widetilde{\mathbf{E}}-\frac{1}{2}\mathbf{M}(\boldsymbol{\Delta}\mathbf{D}^{-1})-\frac{1}{2}\widetilde{\mathbf{E}}(\boldsymbol{\Delta}\mathbf{D}^{-1})\\
+\mathbf{M}\mathcal{O}((\boldsymbol{\Delta}\mathbf{D}^{-1})^2)+\widetilde{\mathbf{E}}\mathcal{O}((\boldsymbol{\Delta}\mathbf{D}^{-1})^2),
\end{multline}
and we introduced $\widetilde{\mathbf{E}}:=\mathbf{D}^{-1/2}\mathbf{E}\mathbf{D}^{-1/2}$. Similarly, we get:
\begin{multline}
g_2=-\frac{1}{2}(\boldsymbol{\Delta}\mathbf{D}^{-1})\mathbf{M}-\frac{1}{2}(\boldsymbol{\Delta}\mathbf{D}^{-1})\widetilde{\mathbf{E}}+\frac{1}{4}(\boldsymbol{\Delta}\mathbf{D}^{-1})\mathbf{M}(\boldsymbol{\Delta}\mathbf{D}^{-1})\\
+\frac{1}{4}(\boldsymbol{\Delta}\mathbf{D}^{-1})\widetilde{\mathbf{E}}(\boldsymbol{\Delta}\mathbf{D}^{-1})-\frac{1}{2}(\boldsymbol{\Delta}\mathbf{D}^{-1})\mathbf{M}\mathcal{O}((\boldsymbol{\Delta}\mathbf{D}^{-1})^2)\\-\frac{1}{2}(\boldsymbol{\Delta}\mathbf{D}^{-1})\widetilde{\mathbf{E}}\mathcal{O}((\boldsymbol{\Delta}\mathbf{D}^{-1})^2),
\end{multline}
and we find the third term:
\begin{multline}
g_3=\mathcal{O}((\boldsymbol{\Delta}\mathbf{D}^{-1})^2)\mathbf{M}+\mathcal{O}((\boldsymbol{\Delta}\mathbf{D}^{-1})^2)\widetilde{\mathbf{E}}\\
-\frac{1}{2}\mathcal{O}((\boldsymbol{\Delta}\mathbf{D}^{-1})^2)\mathbf{M}(\boldsymbol{\Delta}\mathbf{D}^{-1})-\frac{1}{2}\mathcal{O}((\boldsymbol{\Delta}\mathbf{D}^{-1})^2)\widetilde{\mathbf{E}}(\boldsymbol{\Delta}\mathbf{D}^{-1})\\
+\mathcal{O}((\boldsymbol{\Delta}\mathbf{D}^{-1})^2)\mathbf{M}\mathcal{O}((\boldsymbol{\Delta}\mathbf{D}^{-1})^2)\\+\mathcal{O}((\boldsymbol{\Delta}\mathbf{D}^{-1})^2)\widetilde{\mathbf{E}}\mathcal{O}((\boldsymbol{\Delta}\mathbf{D}^{-1})^2).
\end{multline}
Therefore, we compute $\mathbf{M}-\widehat{\mathbf{M}}$ and use standard inequalities for the spectral norm to find the following upper bound:
\begin{multline}
\|\mathbf{M}-\widehat{\mathbf{M}}\|_2\leq \|\widetilde{\mathbf{E}}\|_2+\eta\|\widetilde{\mathbf{E}}\|_2+\mathcal{O}(\eta^2)\|\widetilde{\mathbf{E}}\|_2\\+\eta\|\mathbf{M}\|_2+\mathcal{O}(\eta^2)\|\mathbf{M}\|_2,
\end{multline}
and we divide both sides by $\|\mathbf{M}\|_2$ to complete the proof. 
\end{proof}

This theorem provides an upper bound for the spectral norm of the perturbation on the modified kernel matrix $\mathbf{M}$. This result also applies to the perturbation analysis of the normalized graph Laplacian $\mathbf{L}$. The main underlying assumption is that the term  $\eta=\|\boldsymbol{\Delta}\mathbf{D}^{-1}\|_2$ should be less than $1$, where both $\boldsymbol{\Delta}$ and $\mathbf{D}$ are diagonal matrices. That is, the absolute error incurred by estimating the degree of each node when utilizing the perturbed kernel matrix of the similarity graph should be less than the actual degree of the corresponding node. To better understand the influence of utilizing an approximate kernel matrix on the degree matrix, we have the following connection between $\mathbf{E}$ and $\boldsymbol{\Delta}$: 
\begin{equation}
\widehat{\mathbf{D}}=\text{diag}(\widehat{\mathbf{K}}\mathbf{1}_n)=\underbrace{\text{diag}(\mathbf{K}\mathbf{1}_n)}_{=\mathbf{D}}+\underbrace{\text{diag}(\mathbf{E}\mathbf{1}_n)}_{=\boldsymbol{\Delta}}.
\end{equation}
Next, we  find an upper bound for the spectral norm of the diagonal matrix $\boldsymbol{\Delta}$ as follows:
\begin{equation}
\|\boldsymbol{\Delta}\|_2=\|\mathbf{E}\mathbf{1}_n\|_{\infty}\leq \|\mathbf{E}\mathbf{1}_n\|_2\leq \sqrt{n}\|\mathbf{E}\|_2,
\end{equation}
where $\|\mathbf{x}\|_{\infty}:=\max_{i}|x_i|$ represents the infinity norm for vectors and we used the following inequality for vector norms $\|\mathbf{x}\|_{\infty}\leq \|\mathbf{x}\|_2$. Therefore, the error associated with estimating the degree matrix depends on the kernel matrix approximation error. Furthermore, according to \eqref{eq:thm2up}, we find out the upper bound depends on the quality of the Nystr\"om approximation, i.e., $\|\mathbf{E}\|_2=\|\mathbf{K}-\mathbf{C}\mathbf{W}^\dagger\mathbf{C}^T\|_2$. As mentioned before, this error term is known to be bounded in the Nystr\"om method for unitarily invariant norms and various landmark selection strategies such as uniform sampling without replacement \cite{kumar2012sampling}.

Theorem \ref{thm2} allows us to show that the distance between subspaces spanned by the $k$ leading eigenvectors of $\mathbf{M}$ and $\widehat{\mathbf{M}}$ is bounded. The well-known Davis-Kahan theorem \cite{davis1970rotation} of matrix perturbation theory  states that the sine of the largest principal angle between the two subspaces is upper bounded by the perturbation amount: $\|\sin \boldsymbol{\Theta}(\mathbf{U}_{\mathbf{M},k},\mathbf{U}_{\widehat{\mathbf{M}},k})\|_2\le \|\mathbf{M}-\widehat{\mathbf{M}}\|_2/\delta$, where $\delta>0$ is an eigenvalue separation parameter \cite{yu2015useful,eldridge2018unperturbed}. Therefore, the quality of the Nystr\"om-based spectral embedding improves by reducing the kernel matrix approximation error $\|\mathbf{E}\|_2=\|\mathbf{K}-\mathbf{C}\mathbf{W}^\dagger\mathbf{C}^T\|_2$. 

\begin{figure*}[t]
	\centering
	\subfloat[][\label{fig:eta} $\eta=\|\boldsymbol{\Delta}\mathbf{D}^{-1}\|_2$]{
		\includegraphics[width=0.5\linewidth]{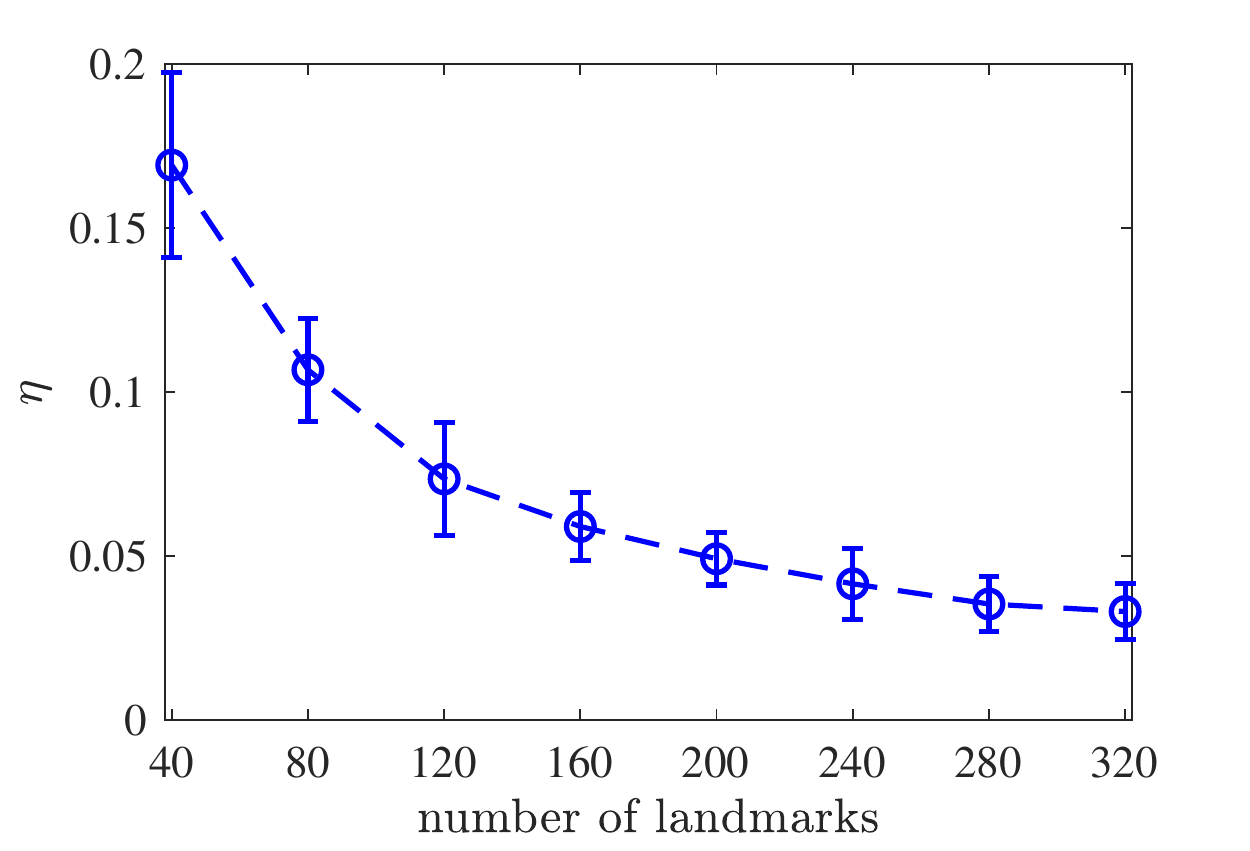}
	}
	\subfloat[][\label{fig:Merror} $\|\mathbf{M}-\widehat{\mathbf{M}}\|_2/\|\mathbf{M}\|_2$]{
		\includegraphics[width=0.5\linewidth]{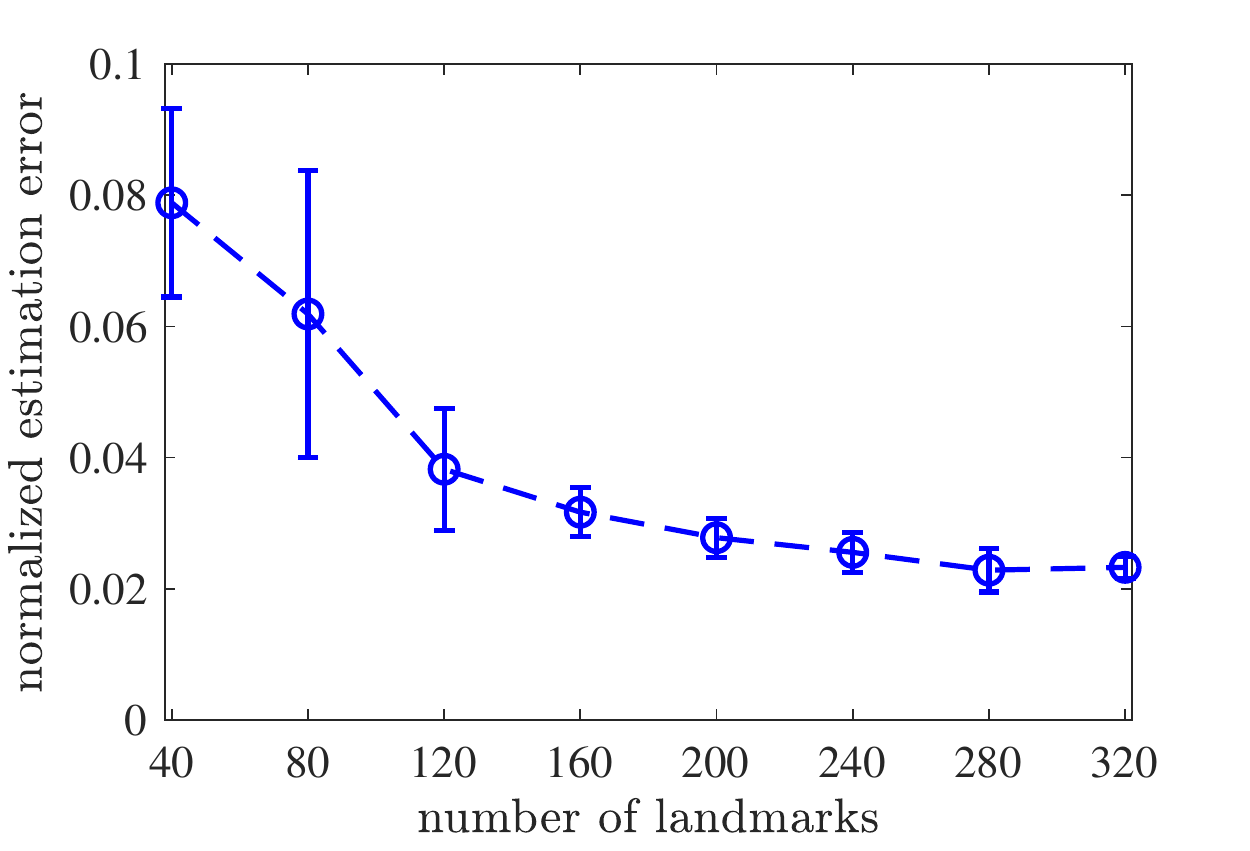}
	} 
	\caption{\label{fig:thm2}
	Empirical investigation of the main assumption and the upper bound for the normalized approximation error in Theorem \ref{thm2}.
	}
\end{figure*}

To further explain the assumption made in Theorem~\ref{thm2} and understand the upper bound for the normalized difference between $\mathbf{M}$ and $\widehat{\mathbf{M}}$, we revisit the numerical simulation performed on the mushrooms data set. For varying values of $m$, we report the average and standard deviation values of $\eta=\|\boldsymbol{\Delta}\mathbf{D}^{-1}\|_2$ and $\|\mathbf{M}-\widehat{\mathbf{M}}\|_2/\|\mathbf{M}\|_2$ over $50$ independent trials in Fig.~\ref{fig:eta} and \ref{fig:Merror}, respectively. We also set the threshold parameter $\gamma=10^{-2}$ in our proposed approach to consider the error introduced by the low-rank approximation of the inner matrix $\mathbf{W}$. As the number of landmarks $m$ increases, we observe a decreasing trend for both quantities because of obtaining more accurate Nystr\"om kernel matrix approximations. Note that even for small values of $m$, e.g., $m=40$, the underlying assumption of Theorem \ref{thm2} is satisfied since $\eta<1$. We also notice that the normalized approximation error for the modified kernel matrix $\mathbf{M}$ gets close to $0.02$ for $m=320$.

\section{Experimental Results}\label{sec:exper}
In this section, we conduct extensive experiments to assess the performance and time complexity of the proposed spectral clustering method on several synthetic and real data sets. We compare our proposed method with other competing techniques that circumvent the construction of full similarity graphs in large-scale settings. All the studied spectral clustering methods are implemented in Matlab. In our approach, Matlab built-in functions are used for computing standard matrix factorizations, including SVD and EVD. We also use Matlab's internal K-means in the last step of spectral clustering to partition the produced spectral embedding into $k$ clusters. For the K-means  algorithm in the last step of spectral clustering, we set the maximum number of iterations to 10 since increasing the number of iterations does not make any notable difference based on our evaluation. 

This section reports two evaluation metrics to assess the performance of spectral clustering techniques using ground-truth labels. Let $\mathcal{L}_1,\ldots,\mathcal{L}_k$ be the ground-truth partition of the data, and $\widehat{\mathcal{L}}_1,\ldots,\widehat{\mathcal{L}}_k$ represent the output of a spectral clustering algorithm. Thus, $n_{ij}=|\mathcal{L}_i\cap\widehat{\mathcal{L}}_j|$ denotes the number of shared samples in $\mathcal{L}_i$ and $\widehat{\mathcal{L}}_j$. Hence, $p_{ij}=n_{ij}/|\widehat{\mathcal{L}}_j|$ and $r_{ij}=n_{ij}/|\mathcal{L}_i|$ represent the precision and recall, respectively. Using these two quantities, the F-score between these two partitions is defined as $F_{ij}=(2p_{ij}r_{ij})/(p_{ij}+r_{ij})$ \cite{you2018scalable}. Let $\Pi$ be the set of all permutations of $\{1,\ldots,k\}$, the average F-score that we report in this paper has the following form:
\begin{equation}
\text{F-score}=\max_{\pi\in\Pi} \frac{1}{k}\sum_{i=1}^{k} F_{i\pi(i)},
\end{equation}
where we follow the common practice of permuting the obtained clusters to best match the ground-truth.  Another metric that we use for evaluation of spectral clustering methods is normalized mutual information (NMI) \cite{strehl2002cluster}, defined as:
\begin{equation}
\text{NMI}=\frac{2I(\mathcal{L};\widehat{\mathcal{L}})}{H(\mathcal{L})+H(\widehat{\mathcal{L}})},
\end{equation}
where $I(\mathcal{L};\widehat{\mathcal{L}})$ denotes the mutual information between the ground-truth and returned clusters, and $H(\cdot)$ represents the entropy of each partition. Both evaluation metrics take values between zero and one, with a larger score indicating better clustering performance. 

For all Nystr\"om-based spectral clustering algorithms, we employ uniform sampling without replacement to select the landmark set. 
 While landmark selection strategies abound in the literature, the primary purpose of this work is to present a principled spectral clustering method to optimize accuracy-efficiency trade-offs for a given Nystr\"om approximation (i.e., fixed $\mathbf{C}$ and $\mathbf{W}$). Towards this goal, we presented two theoretical results that hold for any landmark selection process and designing improved landmark selection techniques is out of the scope of this work. As shown in Theorem \ref{thm2}, a more accurate Nystr\"om approximation results in a smaller perturbation of the modified kernel matrix in prototypical spectral clustering. We thus decided to use uniform sampling for simplicity and  ease of implementation. Since the sampling step involves randomness, we repeat each experiment over $50$ independent trials and report the average values (along with standard deviations for cluster quality evaluations). For our approach, we set the threshold parameter $\gamma$, introduced in \eqref{eq:determinel} for the low-rank approximation of $\mathbf{W}$, to $10^{-2}$  unless otherwise specified. We will present a numerical simulation to investigate the sensitivity of our method to $\gamma$. This section starts with demonstrating the performance and efficiency of our method using synthetic data and we then focus on two real data sets, namely mushrooms and MNIST, from LIBSVM \cite{CC01a}. 
 
 \begin{figure*}[t]
 	\centering
 	\subfloat[][\label{fig:data-moons}  moons, $k=2$ clusters]{
 		\includegraphics[width=0.33\linewidth]{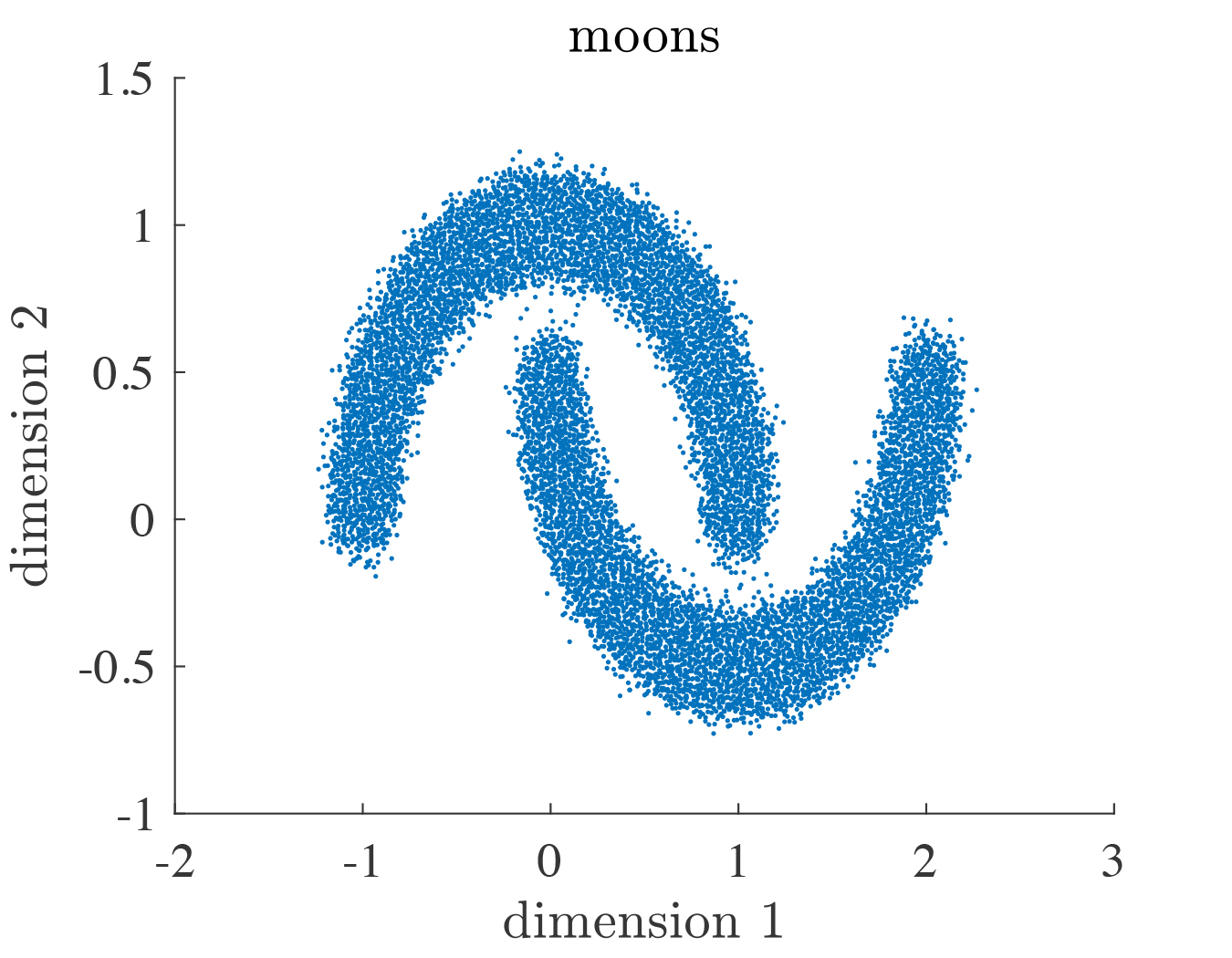}
 	}
 	\subfloat[][\label{fig:data-circles} circles, $k=2$ clusters]{
 		\includegraphics[width=0.33\linewidth]{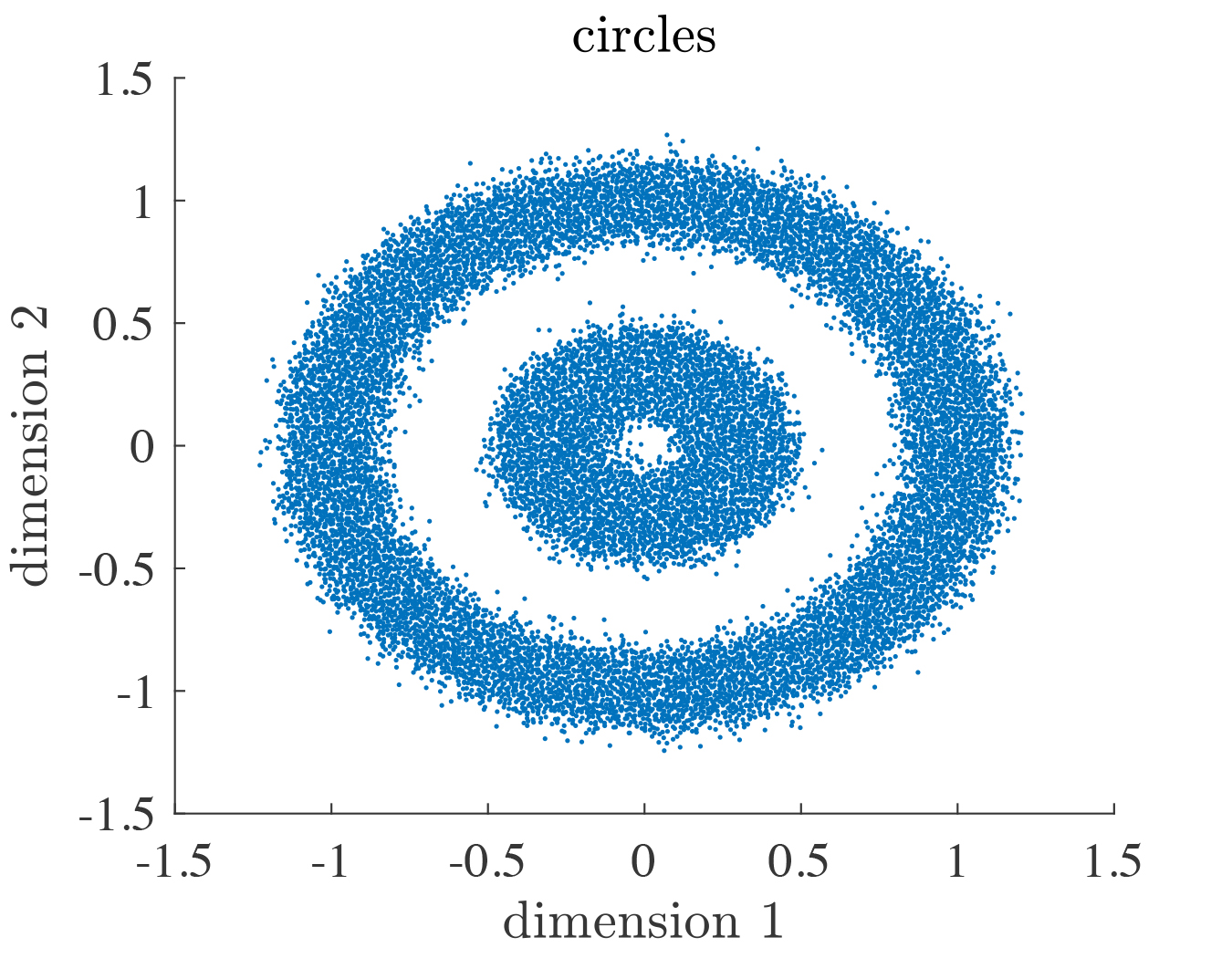}
 	}
 	\subfloat[][\label{fig:data-blobs} blobs, $k=3$ clusters]{
 		\includegraphics[width=0.33\linewidth]{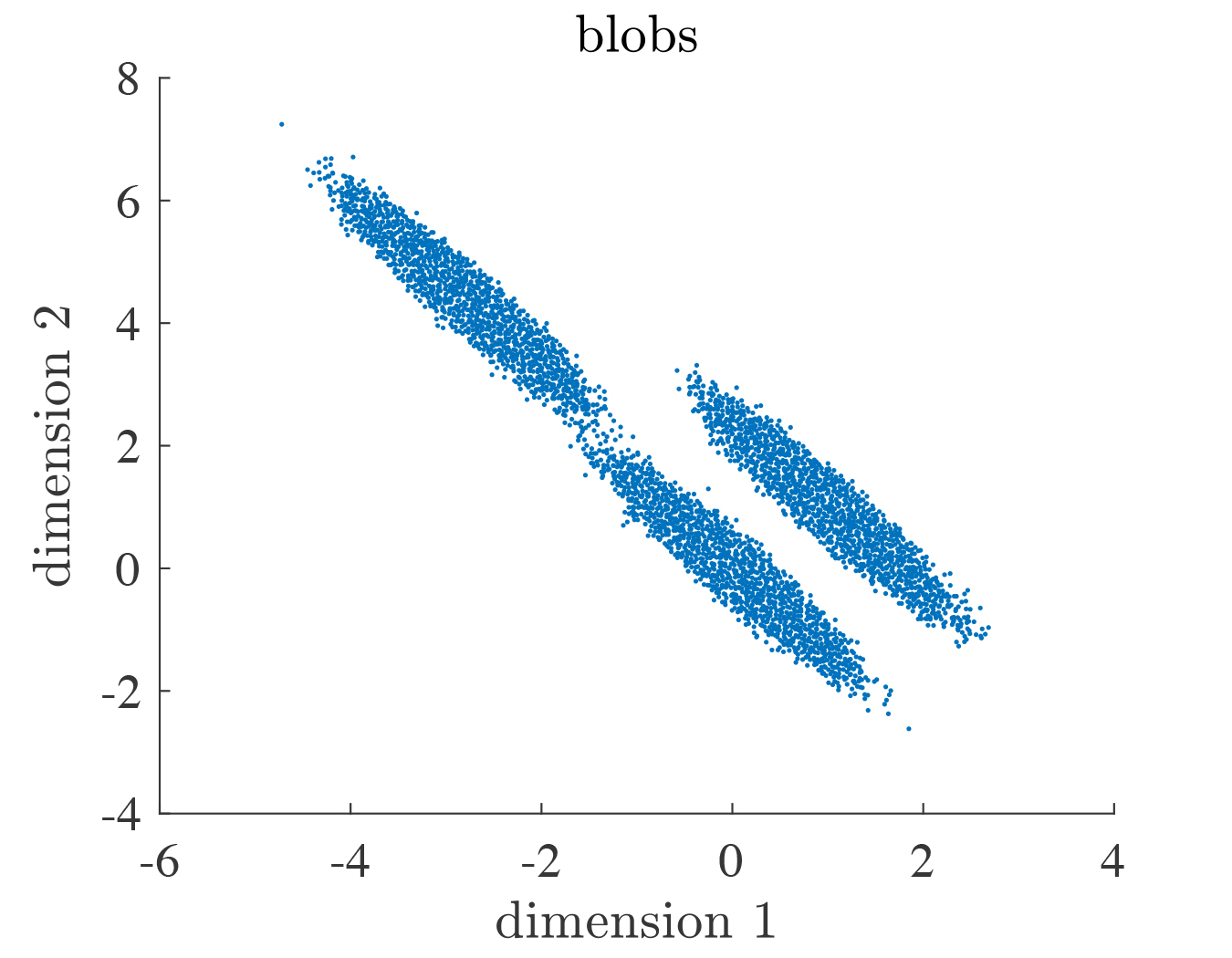}
 	}
 	
 	\subfloat[][\label{fig:fsc-moons}  moons, F-score]{
 		\includegraphics[width=0.33\linewidth]{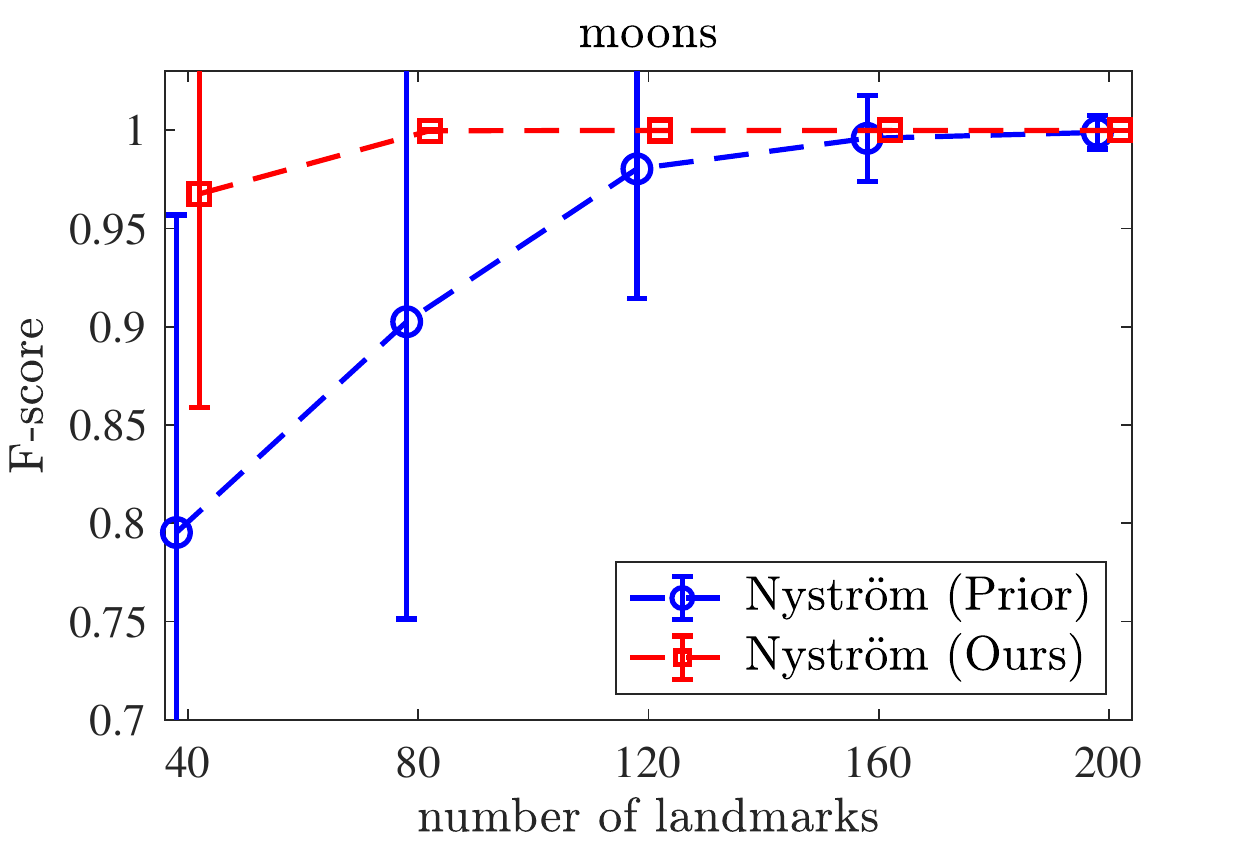}
 	}
 	\subfloat[][\label{fig:fsc-circles} circles, F-score]{
 		\includegraphics[width=0.33\linewidth]{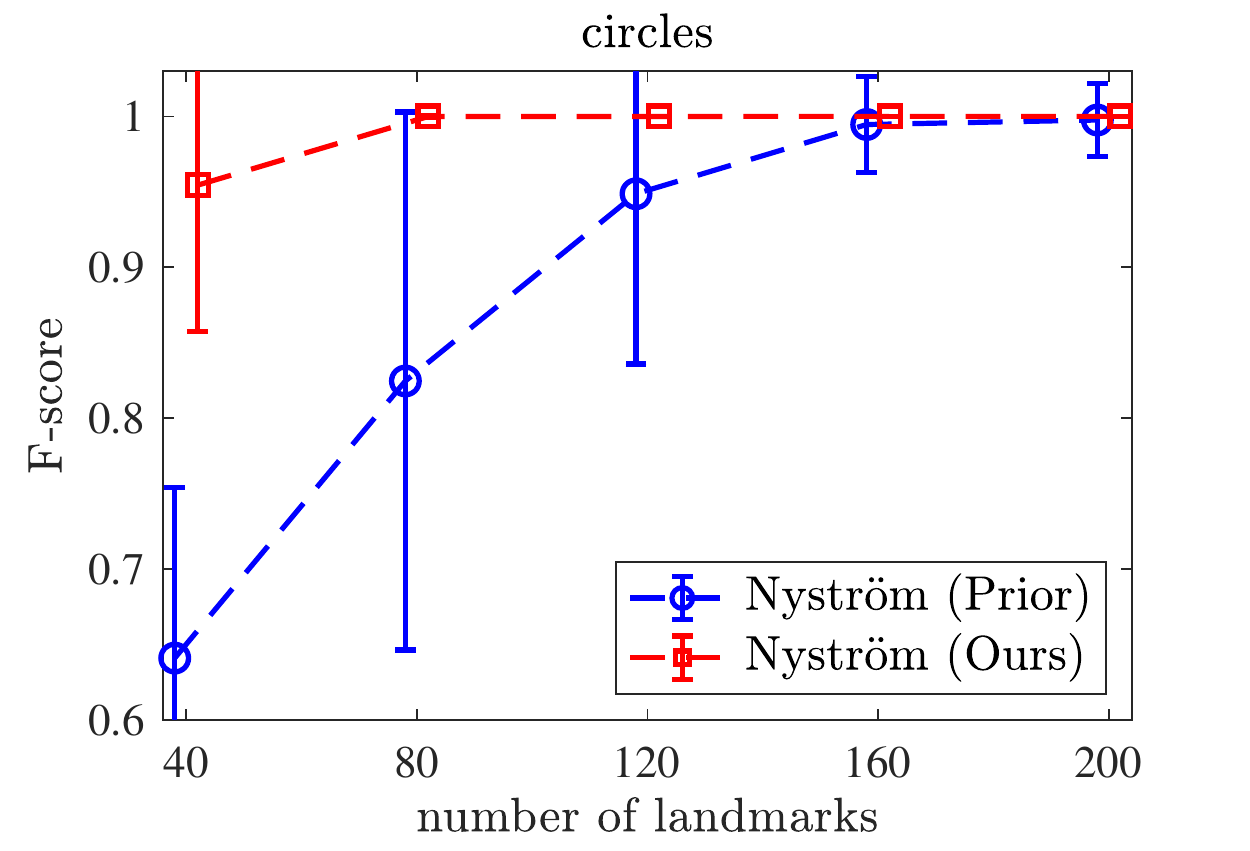}
 	}
 	\subfloat[][\label{fig:fsc-blobs} blobs, F-score]{
 		\includegraphics[width=0.33\linewidth]{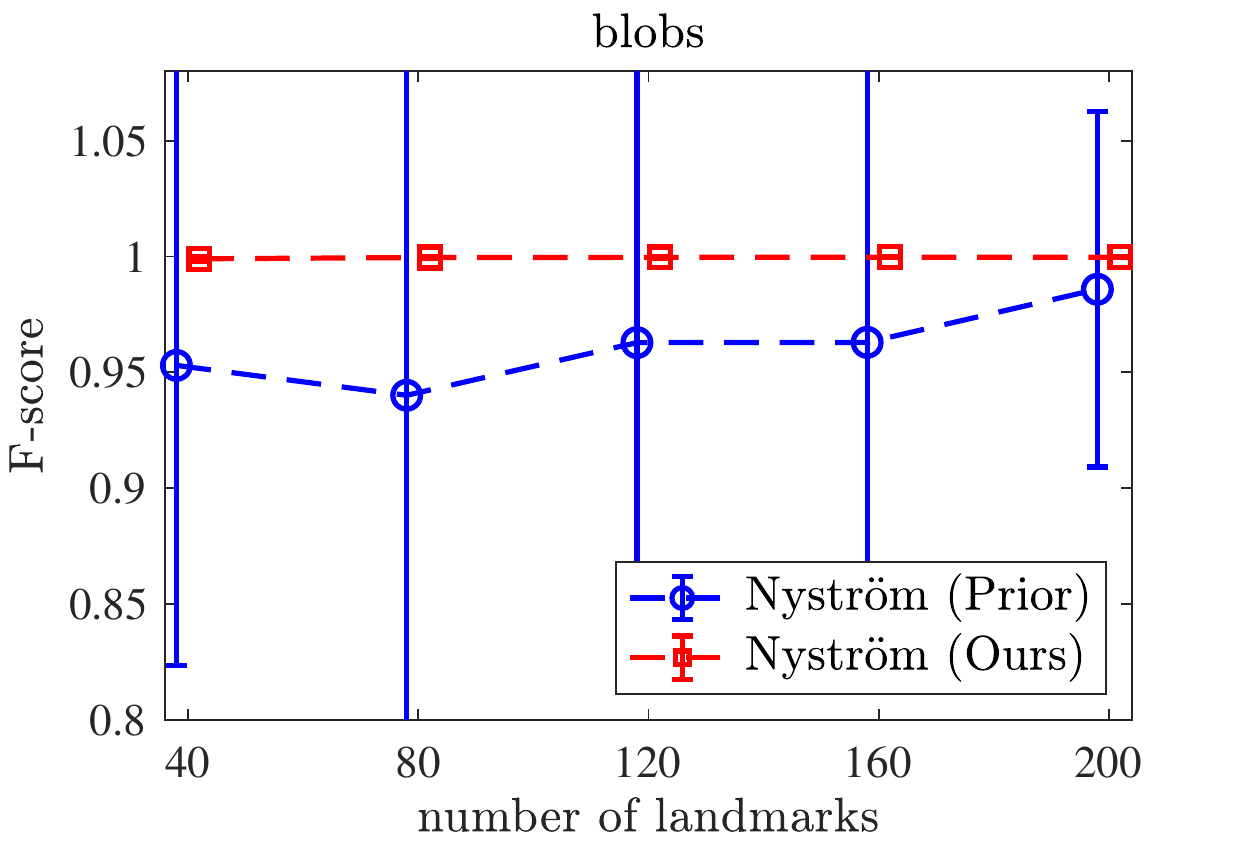}
 	}
 	
 	\subfloat[][\label{fig:nmi-moons}  moons, NMI]{
 		\includegraphics[width=0.33\linewidth]{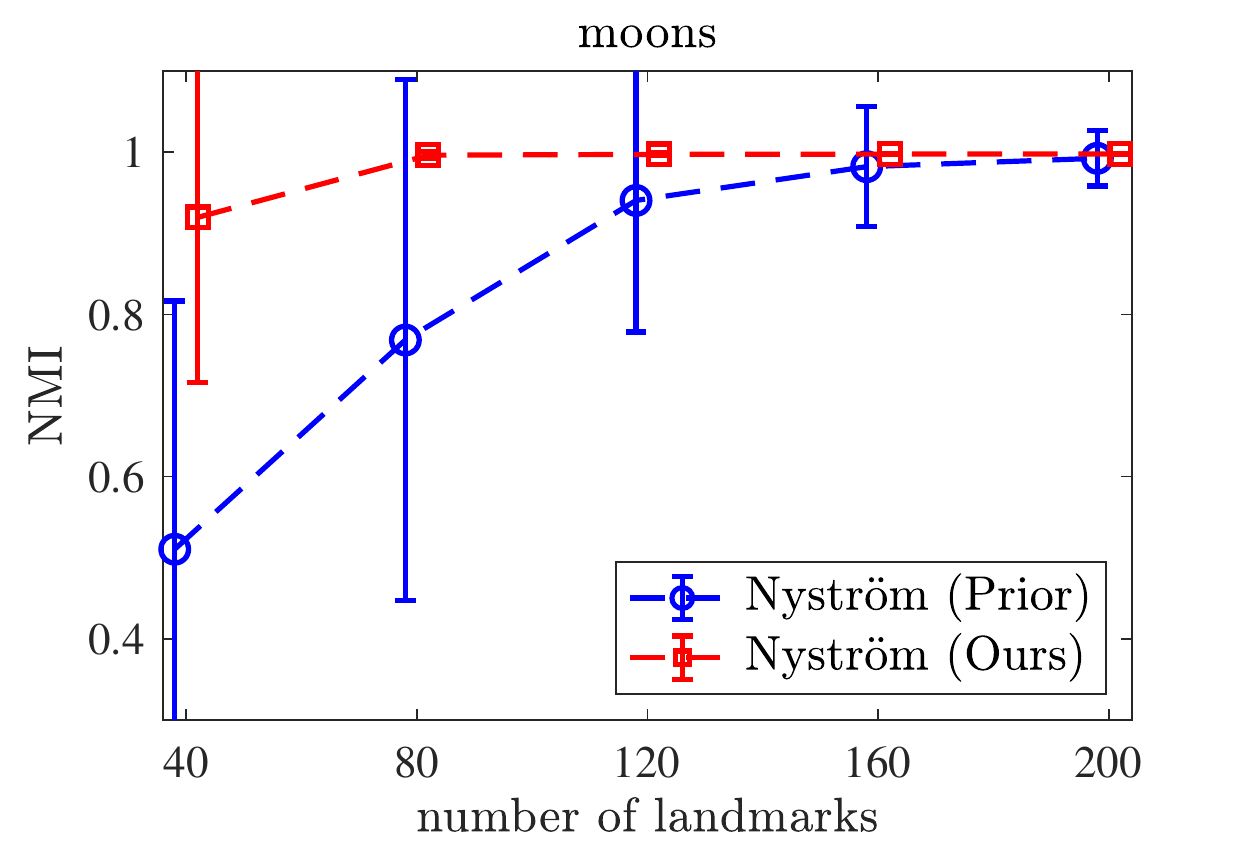}
 	}
 	\subfloat[][\label{fig:nmi-circles} circles, NMI]{
 		\includegraphics[width=0.33\linewidth]{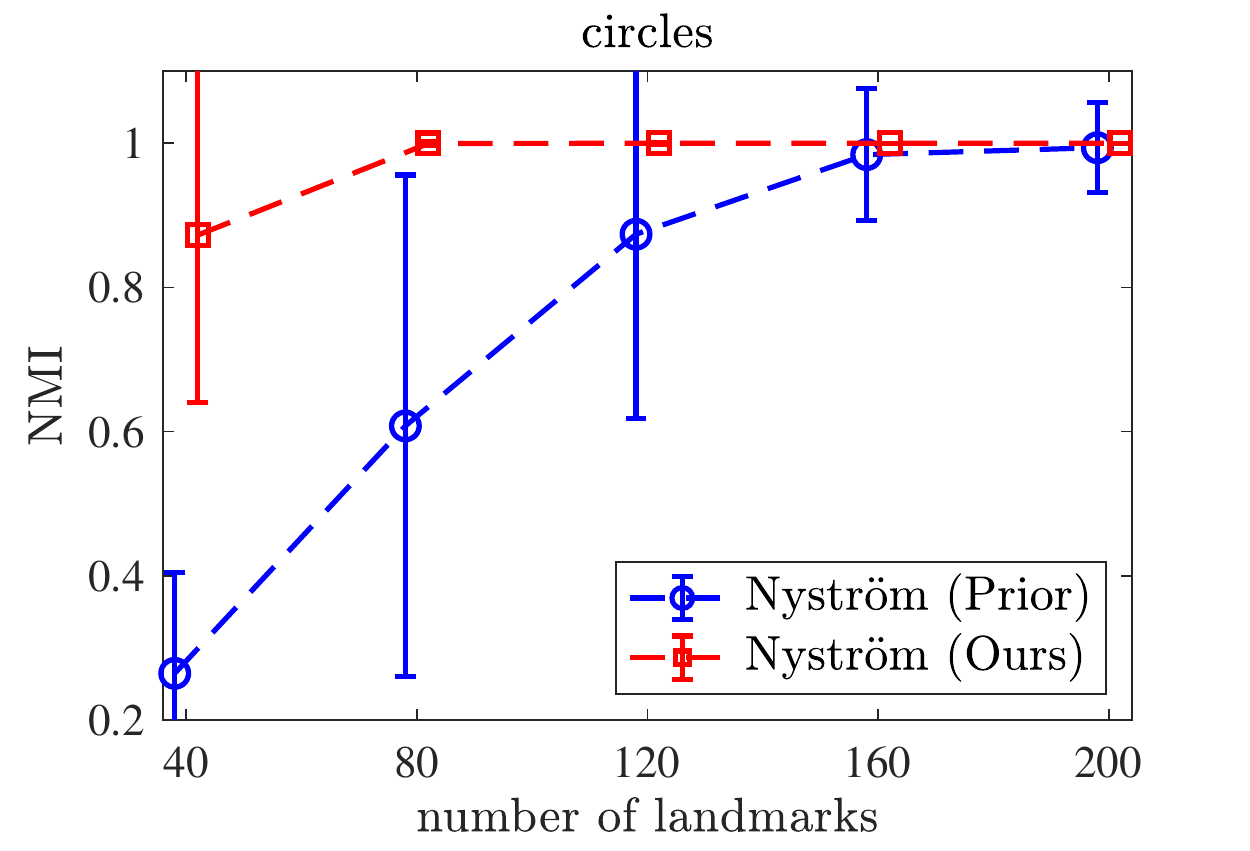}
 	}
 	\subfloat[][\label{fig:nmi-blobs} blobs, NMI]{
 		\includegraphics[width=0.33\linewidth]{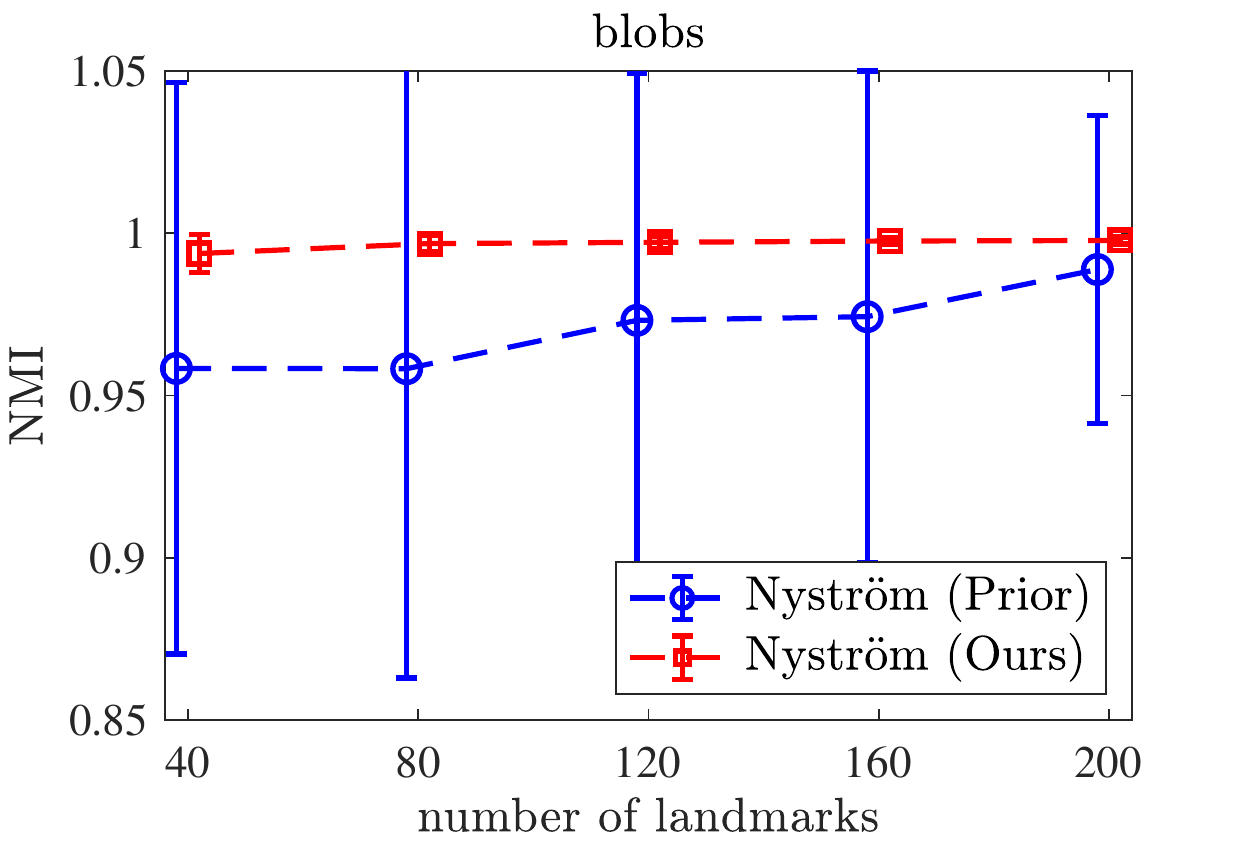}
 	}
 	
 	\caption{\label{fig:syn1}
 		Demonstrating the accuracy of our proposed Nystr\"om-based spectral clustering method on three synthetic data sets.
 	}
 \end{figure*}
 
\subsection{Experiments on Synthetic Data}
In the first experiment, we present a thorough comparison of our approach with the prior scalable Nystr\"om-based spectral clustering method that was introduced in \cite{li2011time}. As discussed in Section \ref{sec:rel}, the prior approximate spectral clustering method based on the Nystr\"om approximation \cite{fowlkes2004spectral} does not scale well with the data size due to the quadratic dependence on the number of landmarks. In our comparison, we consider three synthetic data sets covering different intrinsic structures in real data. In Fig.~\ref{fig:data-moons}, \ref{fig:data-circles}, and \ref{fig:data-blobs}, we plot the three data sets (named moons, circles, and blobs) that contain $n=100,\!000$ samples in $\mathbb{R}^2$ with up to three clusters. To be fair,  we set the kernel parameter $\sigma=0.2$ for all three data sets and both spectral clustering methods share the same $\mathbf{C}$ and $\mathbf{W}$ in each trial. 

The mean and standard deviation of the two evaluation criteria, i.e., F-score and NMI, are shown in Fig.\ref{fig:syn1} for varying values of landmarks $m$ from $40$ to $200$. For all three cases, we see that our proposed approach outperforms the prior work on Nystr\"om-based spectral clustering \cite{li2011time} with higher accuracy levels and lower standard deviations. Also, the superior performance of our approach is more significant when $m$ is relatively small, which is desirable for reducing the memory overhead associated with storing and processing the matrix $\mathbf{C}\in\mathbb{R}^{n\times m}$ obtained from the Nystr\"om approximation. For comparison, we observe that our method provides almost perfect clustering of the blobs data set with $m=40$ landmarks. On the other hand, the prior work on Nystr\"om-based spectral clustering does not have a satisfactory performance on the same data with $m=200$ landmarks, which is five times the required number of landmarks in our proposed method. 

\begin{figure*}[t]
	\centering
	\subfloat[][\label{fig:time-moons}  moons, $k=2$ clusters]{
		\includegraphics[width=0.33\linewidth]{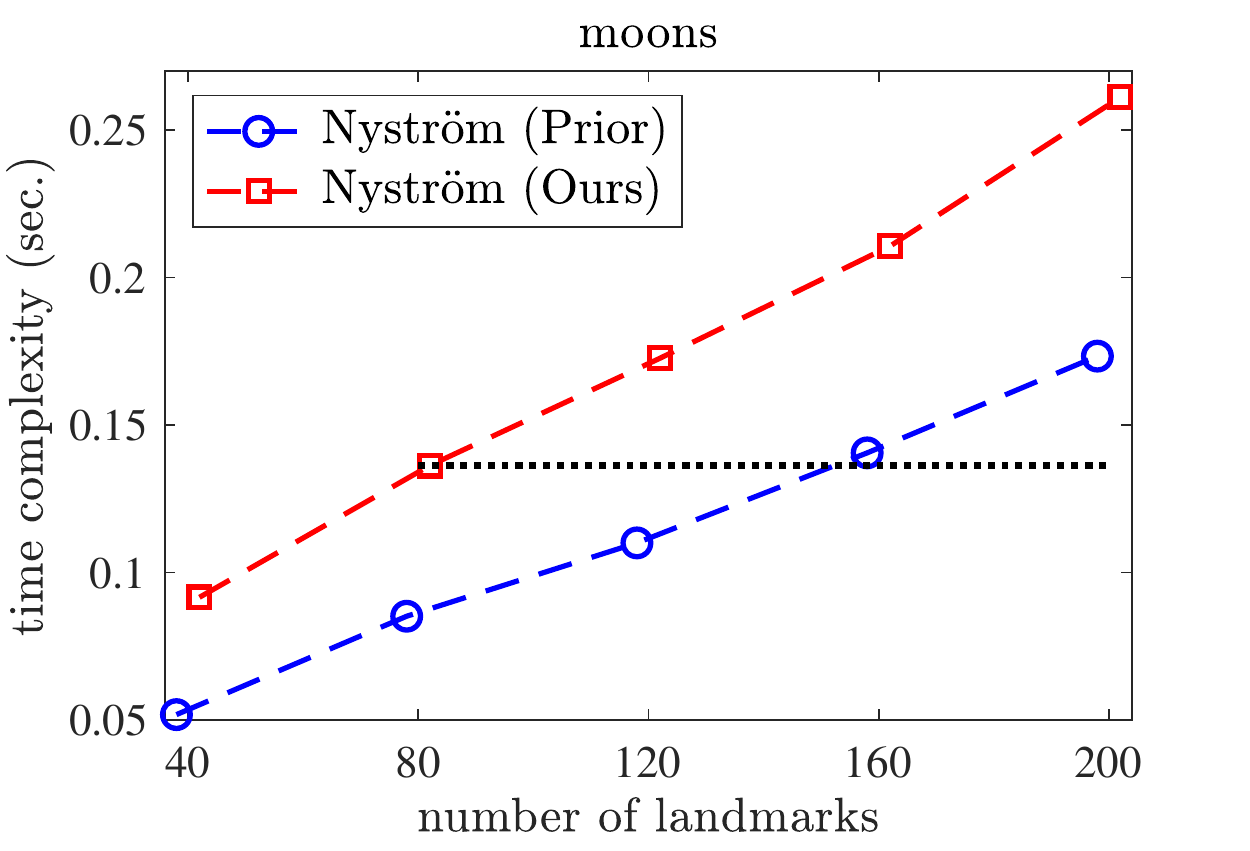}
	}
	\subfloat[][\label{fig:time-circles} circles, $k=2$ clusters]{
		\includegraphics[width=0.33\linewidth]{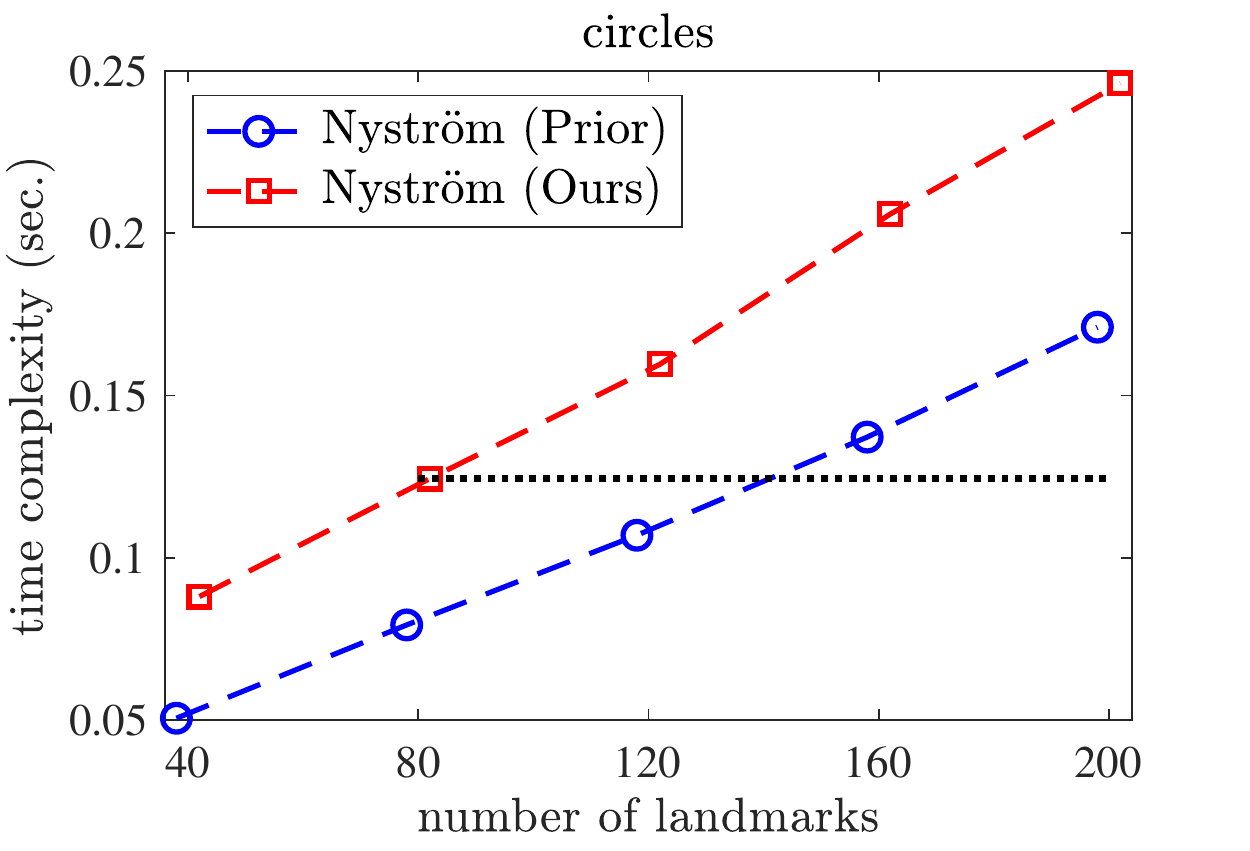}
	}
	\subfloat[][\label{fig:time-blobs} blobs, $k=3$ clusters]{
		\includegraphics[width=0.33\linewidth]{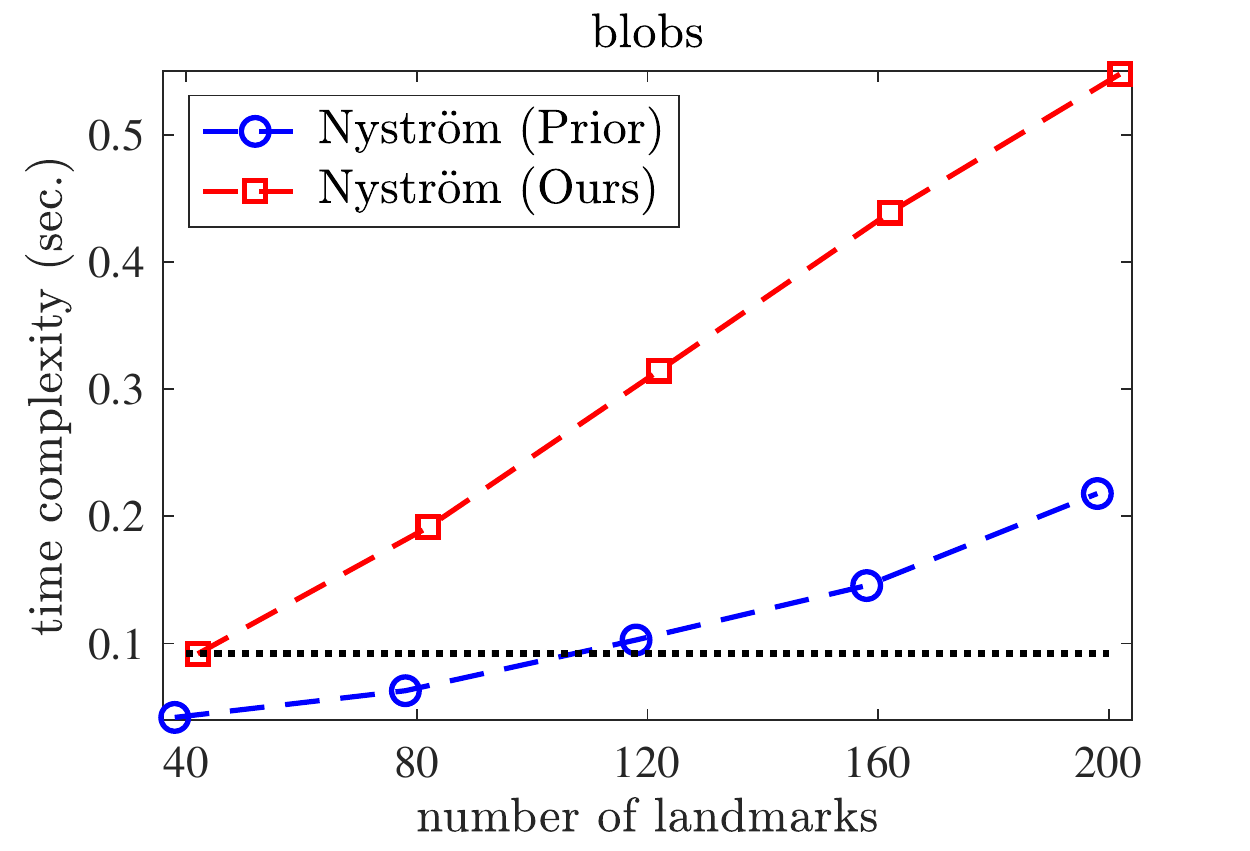}
	}
	
	\caption{\label{fig:syn1_time}
Reporting the time complexity of Nystr\"om-based spectral clustering methods on three synthetic data sets. 
	}
\end{figure*}

We also present timing results for all three data sets in Fig.~\ref{fig:syn1_time}. These results confirm that both methods' computational cost grows linearly as the number of landmarks $m$ increases. However, for a fixed value of $m$, the prior work on spectral clustering is somewhat faster than ours because of enforcing the strict rank reduction step on $\mathbf{W}$, which compromises the clustering accuracy as demonstrated in Fig.~\ref{fig:syn1}. However, our method exploits the threshold parameter $\gamma$ and the decaying spectrum of $\mathbf{W}$ to compute its rank-$l$ approximation to improve the trade-off between accuracy and efficiency. We draw a dashed horizontal line highlighting the running time of our method yielding perfect clustering results using the smallest possible landmark set. This comparison shows that our approach is more time-efficient than the prior Nystr\"om-based spectral clustering method to reach a specific accuracy. 

Although our approach's time complexity is a linear function of $m$ for all three data sets in Fig.~\ref{fig:syn1_time}, we observe that the running time of the proposed method on the blobs data set is noticeably higher than the other experiments. To explain this increase in running time, we report the average value of $l$, i.e., the rank of the inner matrix $\mathbf{W}$, in our proposed method for all three data sets in Table \ref{table:syn}. The reported results reveal that the average number of retained eigenvectors for the blobs data set is more than the first two data sets because the resulting matrix $\mathbf{W}$ has a slowly decaying spectrum. Thus, we should automatically increase the value of $l$ to capture its spectral energy, which justifies our method's higher time complexity on the blobs data set. This table also exemplifies that restricting the rank of $\mathbf{W}$ to be precise $k$, as prescribed in the prior Nystr\"om-based spectral clustering, leads to significant information loss when the spectrum of $\mathbf{W}$ decays slowly. 

\begin{table}[h]
	\centering
	\caption{Average of the rank parameter $l$ for fixed $\gamma=10^{-2}$ on three synthetic data sets.}
	\label{table:syn}
	\setlength{\tabcolsep}{5pt}
	\begin{tabular}{cccccc}
		\toprule
		data set& $m=40$ & $m=80$ & $m=120$ & $m=160$ & $m=200$  \\ 		\midrule
		moons   & $35.4$ & $55.6$ & $66.0$  & $72.3$  & $77.5$   \\
		\addlinespace
		circles & $34.1$ & $52.7$ & $63.2$  & $69.7$  & $74.1$  \\
		\addlinespace
		blobs   & $37.9$ & $66.5$ & $86.9$  & $101.2$ & $112.1$\\ \bottomrule
	\end{tabular}
\end{table}

We further investigate the influence of the threshold parameter $\gamma$ on the performance and time complexity of our approach using the blobs data set, which is more challenging than the other synthetic data sets we considered in this section. We fix the number of landmarks $m=200$ and consider four different values of $\gamma\in\{10^{-3},5\times10^{-3},10^{-2},5\times10^{-2}\}$. We report clustering accuracy and running time results in Table \ref{table:syn2}. The proposed spectral clustering method correctly identifies the $k=3$ clusters within the blobs data set for all values of $\gamma\leq 10^{-2}$. Based on this result and similar observations, we noticed that parameter tuning is not crucial, and setting $\gamma=10^{-2}$ provides a reasonable trade-off between accuracy and efficiency throughout our experiments. However, fine-tuning the parameter $\gamma$ based on the spectrum of $\mathbf{W}$ will provide a significant reduction of time complexity on massive data sets because the cost is inversely proportional to $\gamma$. 

\begin{table}[h]
	\centering
	\caption{Accuracy and running time for varying values of $\gamma$ and fixed $m=200$ on the blobs data set.}
	\label{table:syn2}
	\setlength{\tabcolsep}{5pt}
	\begin{tabular}{ccccc}
		\toprule
		& $\gamma=10^{-3}$ & $\gamma=5\times10^{-3}$ & $\gamma=10^{-2}$ & $\gamma=5\times 10^{-2}$   \\ 		\midrule
		F-score   & $1.00$ & $1.00$ & $1.00$  & $0.96\pm0.13$    \\
		\addlinespace
		NMI & $1.00$ & $1.00$ & $1.00$  & $0.97\pm0.09$   \\
		\addlinespace
		time (sec.)   & $0.74$ & $0.61$ & $0.54$  & $0.39$ \\ \bottomrule
	\end{tabular}
\end{table}

Next, we compare our proposed spectral clustering method based on the Nystr\"om approximation with another competing technique that circumvents the construction of full similarity graphs. We consider a method called Random Features \cite{wu2018scalable}, aiming to directly approximate the Gaussian kernel function based on sampling from an explicit feature map. This method constructs an approximation of the modified kernel matrix $\mathbf{M}\approx\mathbf{S}\mathbf{S}^T$ with $\mathbf{S}\in\mathbb{R}^{n\times D}$, resembling that of our approach. However, the introduced technique in \cite{wu2018scalable}  should choose $D$ so that it significantly exceeds the ambient dimension of the original data. To alleviate the cost, this method enforces each row of $\mathbf{S}$ to be sparse with a few non-zero entries, and a state-of-the-art sparse SVD solver is employed. The sparsity of each row of $\mathbf{S}$ is selected to be $m$  for a fair comparison with spectral clustering methods that utilize the Nystr\"om approximation with $m$ landmarks to approximate the kernel matrix.  We use the provided code by the authors, which is implemented in Matlab and uses C Mex functions for computationally expensive parts. 
\begin{figure*}[t]
	\centering
	\subfloat[][\label{fig:rb_fsc}  F-score]{
		\includegraphics[width=0.33\linewidth]{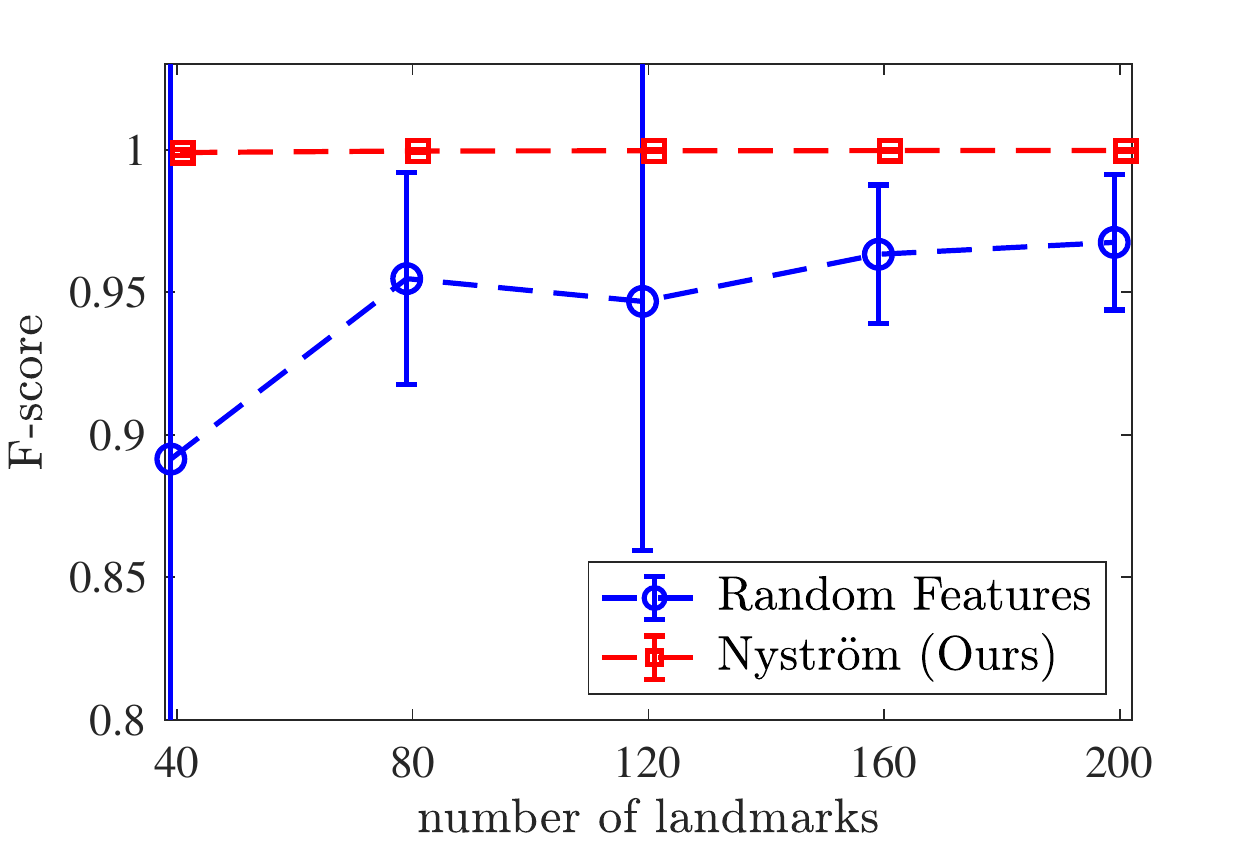}
	}
	\subfloat[][\label{fig:rb_nmi} NMI]{
		\includegraphics[width=0.33\linewidth]{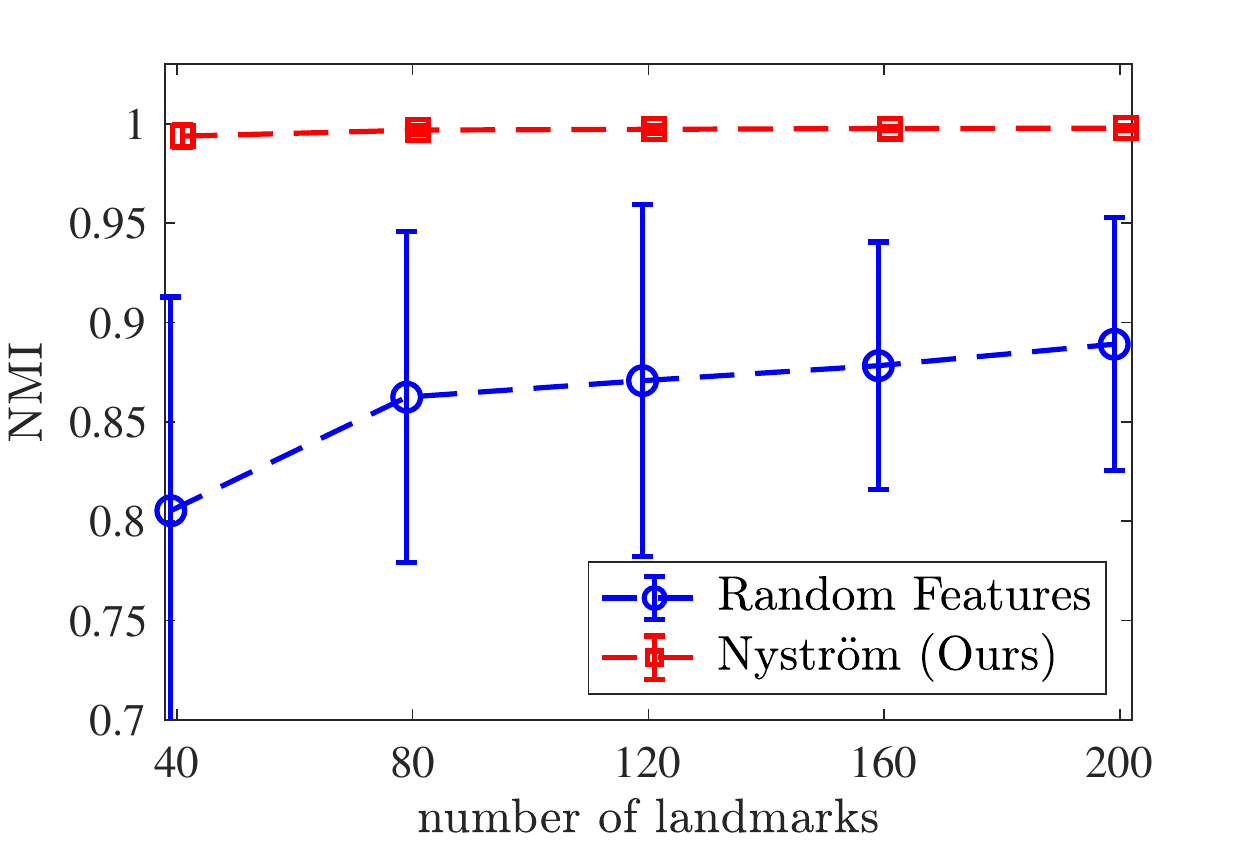}
	}
	\subfloat[][\label{fig:rb_time} time complexity]{
		\includegraphics[width=0.33\linewidth]{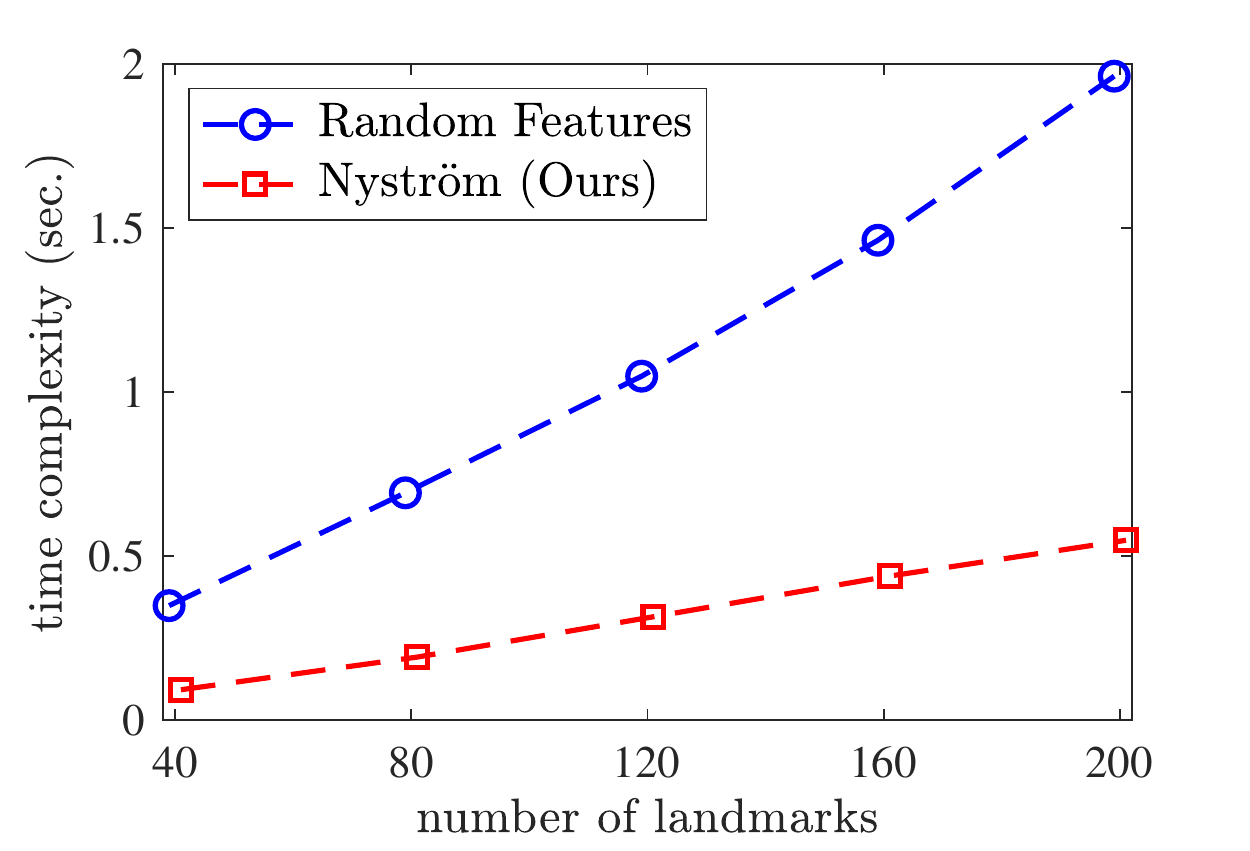}
	}
	
	\caption{\label{fig:syn2_rb}
		Using the blobs data set to compare our proposed method with the previous work on spectral clustering based on Random Features. 
	}
\end{figure*}

In Fig.~\ref{fig:syn2_rb}, we report  accuracy and time complexity results for various values of $m$ on the blobs data set using the same kernel parameter $\sigma=0.2$. We see that our approach noticeably outperforms Random Features for all values of $m$. Furthermore, our approach is more efficient, and the computational savings become more prominent as we increase $m$. We also compared the Random Features technique with our introduced method on the other synthetic data sets omitted to save space, revealing the poor performance of Random Features. The unsatisfactory clustering accuracy of Random Features in our comparison may be related to the lack of careful adjustment of parameters such as the kernel parameter $\sigma$ and the number of landmarks $m$. For example, the reported results in the original work \cite{wu2018scalable} primarily focused on a relatively large value of $m$, such as $m=1,\!024$. Another recent work also observed that using small values of $m$ adversely affects the performance of approximate spectral clustering techniques based on Random Features \cite{he2018fast}. However, our method successfully clusters the blobs data set in less time with $m=40$ landmarks without tuning $\sigma$ for each experiment separately. A further advantage of our approach is that it does not require SVD solvers specifically tailored to large sparse matrices.

\subsection{Experiments on Real Data}
We investigate the performance and computational cost of our proposed spectral clustering method on two public benchmark data sets. For our comparison, we use the prototypical spectral clustering algorithm depicted in Alg.~\ref{alg:SC} to obtain baselines for both evaluation metrics, i.e., F-score and NMI. Therefore, we consider smaller data sets to run the full spectral clustering algorithm on a single machine. Like the previous section, we compare against the prior scalable spectral clustering technique based on the Nystr\"om method \cite{li2011time}. We use the Matlab built-in spectral clustering algorithm, called ``spectralcluster,'' with default values to perform spectral clustering on the full data set. The Matlab implementation also allows performing approximate spectral clustering by forming a sparse nearest neighbor similarity graph \cite{von2007tutorial}. For this technique, we explore the influence of the number of nearest neighbors on the clustering accuracy and running time.

As mentioned, we consider two real data sets from LIBSVM \cite{CC01a}. The MNIST data set is a  collection of handwritten digit images, where a $d=28^2=784$ dimensional vector represents each image. In this work, we reduce the dimension of the original data to $d=500$ by using principal component analysis (PCA) \cite{anaraki2014memory}. We also consider two subsets of this data set: (1) $k=2$ classes with $n=11,\!800$ samples and (2) $k=3$ classes with $n=17,\!718$ samples. The mushrooms data set contains $n=8,\!124$ samples in $\mathbb{R}^{112}$. Each sample in the data set contains information that describes the physical characteristics of a single mushroom. The ground-truth labels provide information regarding poisonous or edible mushrooms; we thus have $k=2$ clusters. Since these two data sets have a different range of values and structures, we used the Matlab built-in support vector machine classifier to find an appropriate value for the kernel parameter $\sigma$. While this parameter tuning requires the ground-truth labels, we do not use them when performing spectral clustering. We set the threshold parameter $\gamma=10^{-2}$, which is the same value used in the previous section for all synthetic data sets.

\begin{table*}[ht]
	\centering
		\caption{Comparing accuracy and running time on real data sets for $m=40$ and $m=80$.}
	\label{table:real}
		\setlength{\tabcolsep}{10pt}
	\begin{tabular}{@{}lccccc@{}}
		\toprule
		& Full Spectral Clustering & Ours, $m=40$  &  Previous, $m=40$ & Ours, $m=80$  & Previous, $m=80$ \\ 
		\midrule
		\textbf{MNIST (2 classes)}\\
		F-score  & $0.976$    & $0.974\pm0.002$    & $0.949\pm0.047$ & $0.976\pm0.001$& $0.961\pm 0.022$   \\
		NMI    & $0.838$     & $0.829\pm0.010$     & $0.734\pm0.145$     & $0.834\pm0.005$& $0.770\pm0.089$\\
		time (sec.)   & $77.194$     & $0.070$     & $0.064$     & $0.105$& $0.094$\\
		\midrule
		\textbf{MNIST (3 classes)}\\
		F-score  & $0.976$     & $0.941\pm0.029$    &  $0.540\pm0.241$ & $0.957\pm 0.021$ & $0.637\pm0.276$\\
		NMI    &  $0.895$   & $0.797\pm 0.067$     & $0.492\pm 0.272$    & $0.838\pm 0.052$ & $0.558\pm 0.297$\\
		time (sec.)   & $246.511$     & $0.112$    & $0.102$    & $0.160$ & $0.140$\\
		\midrule
		\textbf{mushrooms}\\
		F-score   & $0.891$    & $0.888\pm0.004$     & $0.804\pm 0.099$   & $0.890\pm0.001$ & $0.828\pm0.096$  \\
		NMI     &   $0.566$  & $0.551\pm 0.019$     & $0.428\pm 0.135$   & $0.562\pm 0.005$ & $0.462\pm0.125$  \\
		time (sec.)    &  $27.216$    & $0.019$     & $0.016$   & $0.025$ & $0.018$ \\
		\bottomrule                          
	\end{tabular}
\end{table*}

Table \ref{table:real} reports the mean and standard deviation of clustering accuracy as well as time complexity for two values of $m=40$ and $m=80$. As we see, our proposed approach consistently outperforms the prior work on Nystr\"om-based spectral clustering. Our method often reaches the accuracy of spectral clustering on the full data set with $m=40$ landmarks while reducing the running time by two to three orders of magnitude. As expected, for a fixed value of $m$, our method is slightly slower than the previous Nystr\"om-based spectral clustering technique. However, our method is more time-efficient when our goal is to reach higher accuracy levels close to those of the full spectral clustering algorithm. Also, we report the average value of the rank parameter $l$ in Table \ref{table:reall}. These results confirm that the rank-$k$ approximation of the similarity matrix $\mathbf{W}$ leads to the inferior performance of the previous Nystr\"om-based spectral clustering method as the spectrum of $\mathbf{W}$ decays slowly. Because of overlapping clusters, we see that the expected value of $l$ significantly exceeds the number of clusters $k$. Therefore, the proposed method provides an improved trade-off between accuracy and efficiency in all experiments with real data. 

\begin{table}
	\centering
	\caption{Average of the rank parameter $l$ on three real data sets.}
	\label{table:reall}
	\setlength{\tabcolsep}{10pt}
	\begin{tabular}{cccc}
		\toprule
		data set& $m=40$ & $m=80$  \\ 		\midrule
		MNIST (2 classes)   & $12.66$ & $8.86$    \\
		\addlinespace
		MNIST (3 classes) & $39.10$ & $57.62$   \\
		\addlinespace
		mushrooms   & $20.46$ & $16.22$ \\ \bottomrule
	\end{tabular}
\end{table}

Moreover, we used the mushrooms data set to evaluate the number of nearest neighbors for forming sparse similarity graphs to reduce the complexity of prototypical spectral clustering. The Matlab's default value for the number of neighbors is $\lfloor\log(n) \rfloor$. Thus, we considered three values $\alpha\lfloor\log(n) \rfloor$, where $\alpha\in\{1,10,100\}$. The mean F-score values are $0.75$, $0.37$, and $0.63$, respectively. This result shows that increasing the number of nearest neighbors does not necessarily guarantee an improved clustering quality. Also, the F-score for the default value is substantially lower than the one obtained via performing the full spectral clustering algorithm. In terms of running time, the spectral clustering method based on sparse similarity graphs takes $2.53$, $14.5$, and $26.3$ seconds, respectively. Thus, in our comparison, another benefit of using the Nystr\"om approximation is the substantial reduction of time complexity compared to sparse similarity graphs.

\section{Conclusion} \label{sec:conc}
In this paper, we presented a principled approximate spectral clustering algorithm that builds on the Nystr\"om approximation and provided accuracy-efficiency trade-offs. The proposed method has shown to outperform existing Nystr\"om-based spectral clustering methods through several detailed experiments on synthetic and real data. This work also presented two new theoretical results to understand the influence of leveraging low-rank kernel matrices in the context of spectral clustering and normalized cut. We envision several future research directions to improve our proposed approach's performance, efficiency, and robustness. While uniform sampling provides satisfactory accuracy and scalability, we suspect that recent non-uniform sampling strategies, such as coresets, can offer improved accuracy-efficiency trade-offs.
Additionally, geometry-preserving sampling can be instrumental when the number of clusters is enormous. Also, utilizing non-uniform sampling mechanisms with strong theoretical guarantees can be integrated with our presented perturbation analysis to achieve tighter upper bounds. Another future research direction is to extend the current work to handle noisy data sets in spectral clustering.

\bibliographystyle{IEEEtran}
\bibliography{sample.bib}

% Generated by IEEEtran.bst, version: 1.14 (2015/08/26)
\begin{thebibliography}{10}
\providecommand{\url}[1]{#1}
\csname url@samestyle\endcsname
\providecommand{\newblock}{\relax}
\providecommand{\bibinfo}[2]{#2}
\providecommand{\BIBentrySTDinterwordspacing}{\spaceskip=0pt\relax}
\providecommand{\BIBentryALTinterwordstretchfactor}{4}
\providecommand{\BIBentryALTinterwordspacing}{\spaceskip=\fontdimen2\font plus
\BIBentryALTinterwordstretchfactor\fontdimen3\font minus
  \fontdimen4\font\relax}
\providecommand{\BIBforeignlanguage}[2]{{%
\expandafter\ifx\csname l@#1\endcsname\relax
\typeout{** WARNING: IEEEtran.bst: No hyphenation pattern has been}%
\typeout{** loaded for the language `#1'. Using the pattern for}%
\typeout{** the default language instead.}%
\else
\language=\csname l@#1\endcsname
\fi
#2}}
\providecommand{\BIBdecl}{\relax}
\BIBdecl

\bibitem{saxena2017review}
A.~Saxena, M.~Prasad, A.~Gupta, N.~Bharill, O.~Patel, A.~Tiwari, M.~Er,
  W.~Ding, and C.~Lin, ``A review of clustering techniques and developments,''
  \emph{Neurocomputing}, vol. 267, pp. 664--681, 2017.

\bibitem{rodriguez2019clustering}
M.~Rodriguez, C.~Comin, D.~Casanova, O.~Bruno, D.~Amancio, L.~Costa, and
  F.~Rodrigues, ``Clustering algorithms: A comparative approach,'' \emph{PloS
  one}, vol.~14, no.~1, 2019.

\bibitem{ng2002spectral}
A.~Ng, M.~Jordan, and Y.~Weiss, ``On spectral clustering: Analysis and an
  algorithm,'' in \emph{Advances in Neural Information Processing Systems},
  2002, pp. 849--856.

\bibitem{von2007tutorial}
U.~Von~Luxburg, ``A tutorial on spectral clustering,'' \emph{Statistics and
  Computing}, vol.~17, no.~4, pp. 395--416, 2007.

\bibitem{pourkamali2017preconditioned}
F.~Pourkamali-Anaraki and S.~Becker, ``Preconditioned data sparsification for
  big data with applications to {PCA} and {K}-means,'' \emph{IEEE Transactions
  on Information Theory}, vol.~63, no.~5, pp. 2954--2974, 2017.

\bibitem{vijayaraghavan2017clustering}
A.~Vijayaraghavan, A.~Dutta, and A.~Wang, ``Clustering stable instances of
  {E}uclidean {K}-means.'' in \emph{Advances in Neural Information Processing
  Systems}, 2017, pp. 6500--6509.

\bibitem{shi2000normalized}
J.~Shi and J.~Malik, ``Normalized cuts and image segmentation,'' \emph{IEEE
  Transactions on Pattern Analysis and Machine Intelligence}, vol.~22, no.~8,
  pp. 888--905, 2000.

\bibitem{tacsdemir2015approximate}
K.~Ta{\c{s}}demir, Y.~Moazzen, and I.~Yildirim, ``An approximate spectral
  clustering ensemble for high spatial resolution remote-sensing images,''
  \emph{IEEE Journal of Selected Topics in Applied Earth Observations and
  Remote Sensing}, vol.~8, no.~5, pp. 1996--2004, 2015.

\bibitem{chen2017linear}
J.~Chen, Z.~Li, and B.~Huang, ``Linear spectral clustering superpixel,''
  \emph{IEEE Transactions on Image Processing}, vol.~26, no.~7, pp. 3317--3330,
  2017.

\bibitem{shi2017spectral}
M.~Shi and G.~Xu, ``Spectral clustering using {N}ystr{\"o}m approximation for
  the accurate identification of cancer molecular subtypes,'' \emph{Scientific
  reports}, vol.~7, no.~1, pp. 1--13, 2017.

\bibitem{thirion2014fmri}
B.~Thirion, G.~Varoquaux, E.~Dohmatob, and J.~Poline, ``Which fmri clustering
  gives good brain parcellations?'' \emph{Frontiers in neuroscience}, vol.~8,
  p. 167, 2014.

\bibitem{li2019new}
X.~Li and Z.~Wang, ``A new recommendation algorithm combined with spectral
  clustering and transfer learning,'' \emph{Cluster Computing}, vol.~22, no.~1,
  pp. 1151--1167, 2019.

\bibitem{bach2006learning}
F.~Bach and M.~Jordan, ``Learning spectral clustering, with application to
  speech separation,'' \emph{Journal of Machine Learning Research}, vol.~7, pp.
  1963--2001, 2006.

\bibitem{he2016iterative}
L.~He and H.~Zhang, ``Iterative ensemble normalized cuts,'' \emph{Pattern
  Recognition}, vol.~52, pp. 274--286, 2016.

\bibitem{zhu2018global}
Z.~Zhu, Q.~Li, G.~Tang, and M.~Wakin, ``Global optimality in low-rank matrix
  optimization,'' \emph{IEEE Transactions on Signal Processing}, vol.~66,
  no.~13, pp. 3614--3628, 2018.

\bibitem{halko2011finding}
N.~Halko, P.~Martinsson, and J.~Tropp, ``Finding structure with randomness:
  Probabilistic algorithms for constructing approximate matrix
  decompositions,'' \emph{SIAM review}, vol.~53, no.~2, 2011.

\bibitem{bachem2018scalable}
O.~Bachem, M.~Lucic, and A.~Krause, ``Scalable {K}-means clustering via
  lightweight coresets,'' in \emph{International Conference on Knowledge
  Discovery and Data Mining}, 2018, pp. 1119--1127.

\bibitem{drineas1999clustering}
P.~Drineas, A.~Frieze, R.~Kannan, S.~Vempala, and V.~Vinay, ``Clustering in
  large graphs and matrices.'' in \emph{ACM-SIAM Symposium on Discrete
  algorithms}, vol.~99, 1999, pp. 291--299.

\bibitem{kmeansplus}
D.~Arthur and S.~Vassilvitskii, ``K-means++: the advantages of careful
  seeding,'' in \emph{ACM-SIAM Symposium on Discrete algorithms}, 2007, pp.
  1027--1035.

\bibitem{ahmadian2019better}
S.~Ahmadian, A.~Norouzi-Fard, O.~Svensson, and J.~Ward, ``Better guarantees for
  {K-means} and {Euclidean} {K-median} by primal-dual algorithms,'' \emph{SIAM
  Journal on Computing}, pp. FOCS17--97--FOCS17--156, 2019.

\bibitem{bojchevski2017robust}
A.~Bojchevski, Y.~Matkovic, and S.~G{\"u}nnemann, ``Robust spectral clustering
  for noisy data: Modeling sparse corruptions improves latent embeddings,'' in
  \emph{International Conference on Knowledge Discovery and Data Mining}, 2017,
  pp. 737--746.

\bibitem{kim2020outer}
Y.~Kim, H.~Do, and S.~Kim, ``Outer-points shaver: Robust graph-based clustering
  via node cutting,'' \emph{Pattern Recognition}, p. 107001, 2020.

\bibitem{tremblay2020approximating}
N.~Tremblay and A.~Loukas, ``Approximating spectral clustering via sampling: a
  review,'' in \emph{Sampling Techniques for Supervised or Unsupervised
  Tasks}.\hskip 1em plus 0.5em minus 0.4em\relax Springer, 2020, pp. 129--183.

\bibitem{williams2001using}
C.~Williams and M.~Seeger, ``Using the {N}ystr{\"o}m method to speed up kernel
  machines,'' in \emph{Neural Information Processing Systems}, 2001, pp.
  682--688.

\bibitem{kumar2012sampling}
S.~Kumar, M.~Mohri, and A.~Talwalkar, ``Sampling methods for the {N}ystr\"om
  method,'' \emph{Journal of Machine Learning Research}, vol.~13, pp.
  981--1006, 2012.

\bibitem{sun2015review}
S.~Sun, J.~Zhao, and J.~Zhu, ``A review of {N}ystr{\"o}m methods for
  large-scale machine learning,'' \emph{Information Fusion}, vol.~26, pp.
  36--48, 2015.

\bibitem{pourkamali2018randomized}
F.~Pourkamali-Anaraki, S.~Becker, and M.~Wakin, ``Randomized clustered
  {N}ystr\"om for large-scale kernel machines,'' in \emph{AAAI Conference on
  Artificial Intelligence}, 2018.

\bibitem{wang2019scalable}
S.~Wang, A.~Gittens, and M.~Mahoney, ``Scalable kernel {K}-means clustering
  with {N}ystr{\"o}m approximation: relative-error bounds,'' \emph{Journal of
  Machine Learning Research}, vol.~20, no.~1, pp. 431--479, 2019.

\bibitem{pourkamali2019improved}
F.~Pourkamali-Anaraki and S.~Becker, ``Improved fixed-rank {N}ystr{\"o}m
  approximation via {QR} decomposition: Practical and theoretical aspects,''
  \emph{Neurocomputing}, vol. 363, pp. 261--272, 2019.

\bibitem{bach2013sharp}
F.~Bach, ``Sharp analysis of low-rank kernel matrix approximations,'' in
  \emph{Conference on Learning Theory}, 2013, pp. 185--209.

\bibitem{fowlkes2004spectral}
C.~Fowlkes, S.~Belongie, F.~Chung, and J.~Malik, ``Spectral grouping using the
  {N}ystr\"om method,'' \emph{IEEE Transactions on Pattern Analysis and Machine
  Intelligence}, vol.~26, no.~2, pp. 214--225, 2004.

\bibitem{li2011time}
M.~Li, X.~Lian, J.~Kwok, and B.~Lu, ``Time and space efficient spectral
  clustering via column sampling,'' in \emph{Computer Vision and Pattern
  Recognition}, 2011, pp. 2297--2304.

\bibitem{mohan2017exploiting}
M.~Mohan and C.~Monteleoni, ``Exploiting sparsity to improve the accuracy of
  {N}ystr{\"o}m-based large-scale spectral clustering,'' in \emph{International
  Joint Conference on Neural Networks (IJCNN)}, 2017, pp. 9--16.

\bibitem{rahimi2008random}
A.~Rahimi and B.~Recht, ``Random features for large-scale kernel machines,'' in
  \emph{Neural Information Processing Systems}, 2008, pp. 1177--1184.

\bibitem{sutherland2015error}
D.~Sutherland and J.~Schneider, ``On the error of random {Fourier} features,''
  in \emph{Conference on Uncertainty in Artificial Intelligence}, 2015, pp.
  862--871.

\bibitem{wu2016revisiting}
L.~Wu, I.~Yen, J.~Chen, and R.~Yan, ``Revisiting random binning features: Fast
  convergence and strong parallelizability,'' in \emph{International Conference
  on Knowledge Discovery and Data Mining}, 2016, pp. 1265--1274.

\bibitem{he2018fast}
L.~He, N.~Ray, Y.~Guan, and H.~Zhang, ``Fast large-scale spectral clustering
  via explicit feature mapping,'' \emph{IEEE transactions on cybernetics},
  vol.~49, no.~3, pp. 1058--1071, 2018.

\bibitem{wu2018scalable}
L.~Wu, P.~Chen, I.~Yen, F.~Xu, Y.~Xia, and C.~Aggarwal, ``Scalable spectral
  clustering using random binning features,'' in \emph{International Conference
  on Knowledge Discovery and Data Mining}, 2018, pp. 2506--2515.

\bibitem{muja2014scalable}
M.~Muja and D.~Lowe, ``Scalable nearest neighbor algorithms for high
  dimensional data,'' \emph{IEEE Transactions on Pattern Analysis and Machine
  Intelligence}, vol.~36, no.~11, pp. 2227--2240, 2014.

\bibitem{dong2011efficient}
W.~Dong, C.~Moses, and K.~Li, ``Efficient k-nearest neighbor graph construction
  for generic similarity measures,'' in \emph{International Conference on World
  Wide Web}, 2011, pp. 577--586.

\bibitem{lucinska2012spectral}
M.~Luci{\'n}ska and S.~Wierzcho{\'n}, ``Spectral clustering based on k-nearest
  neighbor graph,'' in \emph{International Conference on Computer Information
  Systems and Industrial Management}, 2012, pp. 254--265.

\bibitem{saibaba2019randomized}
A.~Saibaba, ``Randomized subspace iteration: Analysis of canonical angles and
  unitarily invariant norms,'' \emph{SIAM Journal on Matrix Analysis and
  Applications}, vol.~40, no.~1, pp. 23--48, 2019.

\bibitem{boutsidis2015spectral}
C.~Boutsidis, P.~Kambadur, and A.~Gittens, ``Spectral clustering via the power
  method-provably,'' in \emph{International Conference on Machine Learning},
  2015, pp. 40--48.

\bibitem{spielman2011graph}
D.~Spielman and N.~Srivastava, ``Graph sparsification by effective
  resistances,'' \emph{SIAM Journal on Computing}, vol.~40, no.~6, pp.
  1913--1926, 2011.

\bibitem{deadman2016taylor}
E.~Deadman and S.~Relton, ``Taylor's theorem for matrix functions with
  applications to condition number estimation,'' \emph{Linear Algebra and its
  Applications}, vol. 504, pp. 354--371, 2016.

\bibitem{scholkopf2001learning}
B.~Scholkopf and A.~Smola, \emph{Learning with kernels: support vector
  machines, regularization, optimization, and beyond}.\hskip 1em plus 0.5em
  minus 0.4em\relax MIT Press, 2001.

\bibitem{CC01a}
C.~Chang and C.~Lin, ``{LIBSVM}: A library for support vector machines,''
  \emph{ACM Transactions on Intelligent Systems and Technology}, vol.~2, no.~3,
  pp. 27:1--27:27, 2011.

\bibitem{li2014large}
M.~Li, W.~Bi, J.~Kwok, and B.~Lu, ``Large-scale {N}ystr{\"o}m kernel matrix
  approximation using randomized {SVD},'' \emph{IEEE Transactions on Neural
  Networks and Learning Systems}, vol.~26, no.~1, pp. 152--164, 2014.

\bibitem{gittens2016revisiting}
A.~Gittens and M.~Mahoney, ``Revisiting the {N}ystr\"om method for improved
  large-scale machine learning,'' \emph{Journal of Machine Learning Research},
  vol.~17, no.~1, pp. 3977--4041, 2016.

\bibitem{drineas2005nystrom}
P.~Drineas and M.~Mahoney, ``On the {N}ystr{\"o}m method for approximating a
  {G}ram matrix for improved kernel-based learning,'' \emph{Journal of Machine
  Learning Research}, vol.~6, no. 2153--2175, 2005.

\bibitem{yan2009fast}
D.~Yan, L.~Huang, and M.~Jordan, ``Fast approximate spectral clustering,'' in
  \emph{International Conference on Knowledge Discovery and Data Mining}, 2009,
  pp. 907--916.

\bibitem{davis1970rotation}
C.~Davis and W.~Kahan, ``The rotation of eigenvectors by a perturbation.
  {III},'' \emph{SIAM Journal on Numerical Analysis}, vol.~7, no.~1, pp. 1--46,
  1970.

\bibitem{yu2015useful}
Y.~Yu, T.~Wang, and R.~Samworth, ``A useful variant of the {Davis--Kahan}
  theorem for statisticians,'' \emph{Biometrika}, vol. 102, no.~2, pp.
  315--323, 2015.

\bibitem{eldridge2018unperturbed}
J.~Eldridge, M.~Belkin, and Y.~Wang, ``Unperturbed: spectral analysis beyond
  {Davis-Kahan},'' in \emph{Algorithmic Learning Theory}, 2018, pp. 321--358.

\bibitem{you2018scalable}
C.~You, C.~Li, D.~Robinson, and R.~Vidal, ``Scalable exemplar-based subspace
  clustering on class-imbalanced data,'' in \emph{Proceedings of the European
  Conference on Computer Vision}, 2018, pp. 67--83.

\bibitem{strehl2002cluster}
A.~Strehl and J.~Ghosh, ``Cluster ensembles---a knowledge reuse framework for
  combining multiple partitions,'' \emph{Journal of Machine Learning Research},
  vol.~3, pp. 583--617, 2002.

\bibitem{anaraki2014memory}
F.~Pourkamali-Anaraki and S.~Hughes, ``Memory and computation efficient {PCA}
  via very sparse random projections,'' in \emph{International Conference on
  Machine Learning}, 2014, pp. 1341--1349.

\end{thebibliography}

\begin{IEEEbiography}[{\includegraphics[width=1in,height=1.2in,clip,keepaspectratio]{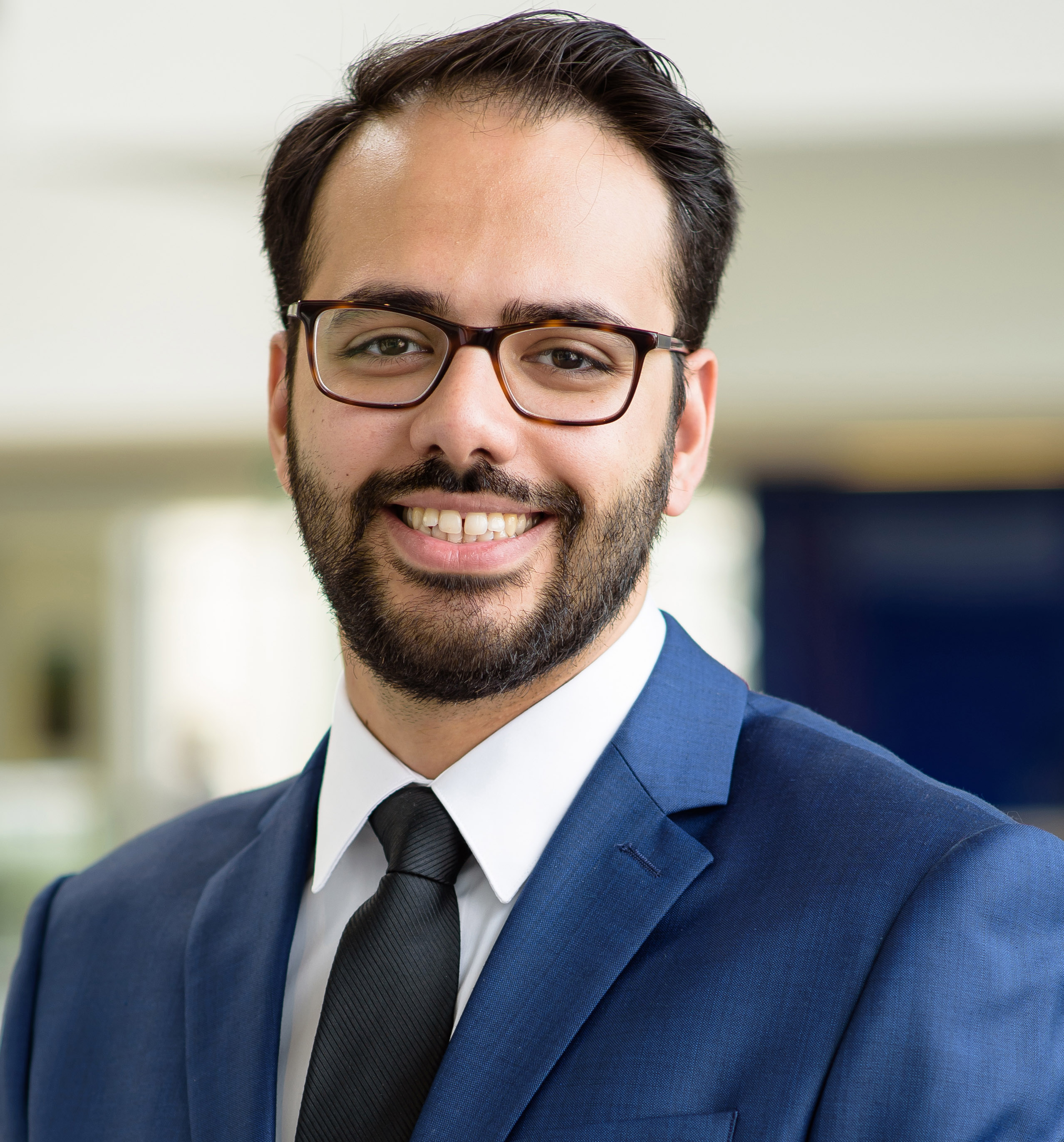}}]{Farhad Pourkamali-Anaraki} is currently an Assistant Professor in the Department of Computer Science at the University of Massachusetts Lowell. He received his PhD in Electrical Engineering from the University of Colorado Boulder in 2017. He spent one year as a Postdoctoral Research Associate in the Applied Mathematics Department at the same institution. His research focuses on developing scalable and provable machine learning algorithms for analyzing complex, high-dimensional data sets. His research also centers around the development and use of artificial intelligence and machine learning in the context of computational decision support for complex systems, including reliability analysis of critical infrastructure sectors and advanced manufacturing technologies.
\end{IEEEbiography}

\end{document}